%% file: main.tex
\documentclass[letterpaper]{article}
\pdfoutput=1

\usepackage[final]{iclr2026_conference}

\usepackage[utf8]{inputenc}
\usepackage[T1]{fontenc}
\usepackage{amsmath,amssymb,amsthm}
\usepackage{mathtools}
\usepackage{booktabs}
\usepackage{multirow}
\usepackage{graphicx}
\usepackage{xcolor}
\usepackage{enumitem}
\usepackage{tikz}
\usetikzlibrary{arrows.meta,positioning}
\usepackage{hyperref}
\hypersetup{
  colorlinks=true,
  linkcolor=blue!60!black,
  citecolor=green!40!black,
  urlcolor=blue!60!black,
}
\usepackage[capitalize,noabbrev]{cleveref}
\usepackage{microtype}

\newtheorem{theorem}{Theorem}
\newtheorem{proposition}[theorem]{Proposition}

\input{math_commands}

\title{Epidemiology of Model Collapse: Modeling Synthetic Data\\Contamination via Bilayer SIR Dynamics}

\author{Xiangyu Wang \\
  \texttt{coder.wangxy@gmail.com}
}

\begin{document}
\maketitle

\begin{abstract}
\input{sections/abstract}
\end{abstract}

\input{sections/introduction}
\input{sections/related_work}
\input{sections/model}
\input{sections/calibration}
\input{sections/abm}
\input{sections/empirical}
\input{sections/interventions}

\bibliographystyle{iclr2026_conference}
\bibliography{references}

\newpage
\input{sections/appendix}

\end{document}

%% file: math_commands.tex
\newcommand{\SD}{S_D}
\newcommand{\ID}{I_D}
\newcommand{\RD}{R_D}
\newcommand{\SM}{S_M}
\newcommand{\IM}{I_M}
\newcommand{\RM}{R_M}

\newcommand{\betaD}{\beta_D}
\newcommand{\betaM}{\beta_M}
\newcommand{\gammaD}{\gamma_D}
\newcommand{\gammaM}{\gamma_M}
\newcommand{\muD}{\mu_D}
\newcommand{\muM}{\mu_M}
\newcommand{\LambdaD}{\Lambda_D}
\newcommand{\LambdaM}{\Lambda_M}

\newcommand{\Rzero}{R_0}

\newcommand{\DFE}{\mathrm{DFE}}
\newcommand{\EE}{\mathrm{EE}}

\newcommand{\ND}{N_D}
\newcommand{\NM}{N_M}

\newcommand{\Prob}{\mathbb{P}}

\newcommand{\PPL}{\mathrm{PPL}}

\newcommand{\DAIC}{\Delta\mathrm{AIC}}

%% file: sections/abstract.tex
Training on synthetic data causes model collapse, but existing analyses treat this as single-chain degradation.
In reality, the AI ecosystem involves cross-contamination: models ingest synthetic data from other models, produce new synthetic text, and contaminate shared corpora.
We propose a \emph{bilayer coupled SIR/SIRS} framework---a phenomenological mean-field model treating data corpora and AI models as two interacting populations, each with susceptible, infected, and recovered compartments linked by cross-layer transmission.
The SIRS variant (our primary recommendation) incorporates immunity waning, reflecting that filtered corpora and retrained models remain susceptible to re-contamination.
We derive the basic reproduction number $\Rzero = \sqrt{\betaD \betaM / [(\gammaD+\muD)(\gammaM+\muM)]}$ via the Next Generation Matrix and apply standard epidemic threshold results to the bilayer system.
Illustrative scenario-based calibration from public AI text prevalence data yields supercritical dynamics ($\Rzero > 1$) across three scenarios; Sobol sensitivity analysis identifies synthetic-text detection as the highest-leverage parameter.
A bipartite-network agent-based model confirms mean-field consistency ($R^2 > 0.96$) for dense networks but degrades under heterogeneity.
GPT-2 contamination chain experiments (192 single-chain runs across WikiText and Shakespeare) show dose-response degradation and diversity loss (Distinct-2 drops from 0.68 to 0.38) qualitatively consistent with the threshold picture.
Matched-budget source-diversity experiments (1{,}088 additional runs; $K \in \{1,3,5\}$ at $\alpha{=}1$, $K \in \{1,5\}$ at $\alpha{=}0.5$; 8 seeds, fixed pool size) provide suggestive evidence that multi-source mixing modestly attenuates collapse at $\alpha{=}1$ (${\sim}2$ PPL, $d \approx 0.8$, one-sided $p = 0.047$), but the effect \emph{vanishes} at $\alpha{=}0.5$, confirming contamination fraction as the dominant driver.
Under model assumptions, illustrative intervention analysis identifies detection-based filtering and herd immunity as the highest-leverage strategies.

%% file: sections/introduction.tex
\section{Introduction}
\label{sec:intro}

Large language models (LLMs) now generate a substantial and growing fraction of online text.
Recent estimates suggest that up to 74\% of newly indexed web pages may contain AI-generated or AI-modified content~\citep{thompson2024} (a projected figure; see \cref{app:calibration}), with AI text prevalence among top-ranked search results increasing approximately fourfold between 2023 and 2025~\citep{liang2024}.
As models trained on web-crawled corpora inevitably ingest this synthetic output, a feedback loop emerges: models produce text that enters training data, which shapes the next generation of models.
Shumailov et~al.~\citep{shumailov2024} demonstrated that recursive training on self-generated data causes \emph{model collapse}---progressive degradation of output quality and diversity.
Subsequent work has characterized this phenomenon in language models~\citep{dohmatob2024}, image generators~\citep{alemohammad2024}, and mixed real-synthetic training regimes~\citep{gerstgrasser2024}.

However, all formal analyses of model collapse treat it as a \emph{single-chain} process: model $A$ generates data, model $B$ trains on it, model $C$ trains on $B$'s output, and so on.
The real AI ecosystem is not a chain but a \emph{network}.
Thousands of models consume data from shared corpora; each model's outputs re-enter the data pool through web publishing, API responses, and synthetic data pipelines.
Cross-contamination is the norm, not the exception.
No existing mathematical framework provides even a stylized model of these ecosystem-level contamination dynamics.

We observe that this cross-contamination process is structurally analogous to \emph{epidemic spreading}.
Contaminated data ``infects'' models during training; infected models ``transmit'' synthetic artifacts back into data corpora.
The two populations---data corpora and AI models---interact through cross-layer transmission, forming a natural bilayer epidemic system.
The SIR (Susceptible--Infected--Recovered) compartmental framework~\citep{kermack1927,hethcote2000} provides an established mathematical apparatus for reasoning about transmission thresholds, equilibria, and interventions in such two-population systems.

\paragraph{Contributions.}
We make five contributions:
\begin{enumerate}[leftmargin=*,itemsep=1pt,topsep=2pt]
    \item \textbf{Bilayer SIR/SIRS framework} (\cref{sec:model}): a phenomenological mean-field ODE coupling data and model populations. The SIRS variant is our primary recommendation.
    \item \textbf{Threshold analysis} (\cref{sec:model}): $\Rzero$ via the Next Generation Matrix~\citep{vandendriessche2002}; DFE stability, endemic equilibrium, and bifurcation results.
    \item \textbf{Illustrative calibration} (\cref{sec:calibration}): three scenarios from public data, with Sobol sensitivity analysis identifying detection ($\gammaD$) as highest-leverage.
    \item \textbf{GPT-2 experiments} (\cref{sec:empirical}): 192 single-chain runs show dose-response degradation in perplexity and diversity; 1{,}088 matched-budget source-diversity runs show modest attenuation at $\alpha{=}1$ that vanishes at $\alpha{=}0.5$.
    \item \textbf{Intervention analysis} (\cref{sec:interventions}): six strategies in 135 evaluations (15 pairs $\times$ 9 intensities), conditional on scenario parameters.
\end{enumerate}

\noindent\textbf{Scope.}
Our SIR mapping is a \emph{phenomenological framework}; the mathematical results are standard epidemic theory applied to a new domain.
Calibration is illustrative; GPT-2 experiments provide a qualitative bridge at small scale.
Limitations are discussed in \cref{sec:interventions}.

%% file: sections/related_work.tex
\section{Related Work}
\label{sec:related}

\paragraph{Model collapse in generative models.}
Shumailov et~al.~\citep{shumailov2024} established that recursively training language models on their own output causes progressive quality degradation, a phenomenon they termed \emph{model collapse}.
Dohmatob et~al.~\citep{dohmatob2024} proved that even small fractions of synthetic data in training corpora can trigger collapse, providing tight theoretical bounds.
Alemohammad et~al.~\citep{alemohammad2024} demonstrated analogous collapse in image generation (\emph{Model Autophagy Disorder}), showing the phenomenon transcends modalities.
Gerstgrasser et~al.~\citep{gerstgrasser2024} studied data accumulation as a mitigation, finding that preserving original data across generations can slow but not always prevent collapse.
Seddik et~al.~\citep{seddik2024} derived statistical bounds on degradation from synthetic training data.
All of these works analyze \emph{single-chain} dynamics: one lineage of models training on its own outputs.
Our work differs by modeling the \emph{ecosystem-level} cross-contamination network, where multiple model lineages share and pollute a common data pool.

\paragraph{Epidemic modeling beyond biology.}
The SIR framework~\citep{kermack1927} and its extensions~\citep{hethcote2000} have been applied far beyond infectious disease.
Kephart and White~\citep{kephart1993} modeled computer virus propagation using SIS dynamics on networks.
Jin et~al.~\citep{jin2013epidemiological} and Vosoughi et~al.~\citep{vosoughi2018} applied epidemic models to rumor and misinformation spreading on social media.
Pastor-Satorras et~al.~\citep{pastor2015} provided a comprehensive treatment of epidemic processes on complex networks, establishing mean-field approximations and threshold conditions for networked SIR/SIS systems.
The Next Generation Matrix method for computing $\Rzero$ in multi-compartment models was formalized by Diekmann et~al.~\citep{diekmann1990} and extended by van den Driessche and Watmough~\citep{vandendriessche2002}.
Castillo-Chavez and Song~\citep{castillochavez2004} demonstrated the application of Sotomayor's theorem to establish bifurcation results in epidemic systems.
Our bilayer SIR formulation draws on these established techniques but applies them to a new domain: AI training data contamination.

\paragraph{AI data quality and provenance.}
Kirchenbauer et~al.~\citep{kirchenbauer2023} introduced watermarking for LLM outputs, enabling downstream detection of synthetic text.
Tang et~al.~\citep{tang2024} surveyed detection methods for LLM-generated text, establishing current accuracy bounds.
Mitchell et~al.~\citep{mitchell2023} proposed model cards as a provenance-tracking mechanism.
Longpre et~al.~\citep{longpre2024} studied the effects of training data composition on model quality.
These works address individual components of what our framework models as ``recovery'' mechanisms ($\gammaD$, $\gammaM$): detection removes contaminated data, watermarking enables filtering, and provenance tracking supports data hygiene.
Our contribution is to embed these mechanisms within a unified dynamical systems framework that quantifies their collective impact on ecosystem-level contamination.

%% file: sections/model.tex
\section{Bilayer SIR Model}
\label{sec:model}

We model synthetic data contamination in the AI ecosystem as a bilayer coupled SIR system.
The two layers represent \emph{data corpora} (layer $D$) and \emph{AI models} (layer $M$), each with susceptible, infected, and recovered compartments linked by cross-layer transmission.

\subsection{Compartment Mapping}
\label{sec:compartments}

\cref{tab:compartments} defines the mapping.
Data corpora transition from clean ($\SD$) to contaminated ($\ID$) when synthetic content is ingested, and recover ($\RD$) via detection.
Models transition analogously through training on contaminated data and retraining.

\paragraph{Operational definitions.}
In practice, contamination is continuous rather than binary.
We operationalize the S/I/R partition via a threshold: a corpus (or model) is ``infected'' if its synthetic content fraction (or synthetic-data training fraction) exceeds a domain-specific threshold $\tau$ sufficient to measurably degrade downstream quality.
``Recovered'' means the corpus has been filtered or the model retrained so that synthetic fraction drops below $\tau$---a coarse-grained bookkeeping state rather than a claim of permanent decontamination.
Crucially, recovered entities remain susceptible to re-contamination; we therefore recommend the SIRS variant (\cref{sec:sirs}) as the primary model.
This threshold discretization is a modeling simplification; state variables are population counts (with birth rate $\Lambda$ and death rate $\mu$).
The $\Rzero$ formula depends on rate ratios, not on $\tau$ directly; $\tau$ affects only the assignment of entities to compartments.
In the experiments (\cref{sec:empirical}), $\alpha$ directly controls contamination fraction, bypassing $\tau$.

\begin{table}[t]
\centering
\caption{Compartment mapping between the bilayer SIR model and the AI ecosystem.}
\label{tab:compartments}
\footnotesize
\resizebox{\linewidth}{!}{\begin{tabular}{@{}llll@{}}
\toprule
\textbf{Layer} & \textbf{Compartment} & \textbf{Ecosystem Interpretation} & \textbf{Key Parameter} \\
\midrule
\multirow{3}{*}{Data ($D$)} & $\SD$ & Clean data corpora & $\LambdaD$: new data creation rate \\
 & $\ID$ & Contaminated corpora & $\betaD$: contamination ingestion rate \\
 & $\RD$ & Cleaned/filtered corpora & $\gammaD$: detection \& removal rate \\
\midrule
\multirow{3}{*}{Model ($M$)} & $\SM$ & Clean-trained models & $\LambdaM$: new model deployment rate \\
 & $\IM$ & Contaminated models & $\betaM$: contaminated training rate \\
 & $\RM$ & Retrained/clean models & $\gammaM$: retraining rate \\
\bottomrule
\end{tabular}}
\end{table}

\subsection{ODE System}
\label{sec:ode}

The dynamics are governed by six coupled ODEs with cross-layer infection: contaminated models ($\IM$) infect data at rate $\betaD$, and contaminated data ($\ID$) infects models at rate $\betaM$.
Turnover rates $\mu_D$, $\mu_M$ capture data obsolescence and model retirement.

\begin{figure}[t]
\centering
\begin{tikzpicture}[
    compartment/.style={draw, rounded corners, minimum width=1.4cm, minimum height=0.9cm, font=\small\bfseries},
    datalayer/.style={compartment, fill=blue!12},
    modellayer/.style={compartment, fill=orange!12},
    arr/.style={-{Stealth[length=2.5mm]}, thick},
    crossarr/.style={-{Stealth[length=2.5mm]}, thick, dashed, red!70!black},
    lbl/.style={font=\scriptsize, midway},
]

\node[datalayer] (SD) at (0, 2.2) {$S_D$};
\node[datalayer] (ID) at (3, 2.2) {$I_D$};
\node[datalayer] (RD) at (6, 2.2) {$R_D$};

\node[modellayer] (SM) at (0, -0.6) {$S_M$};
\node[modellayer] (IM) at (3, -0.6) {$I_M$};
\node[modellayer] (RM) at (6, -0.6) {$R_M$};

\draw[arr] (SD) -- node[lbl, above] {$\beta_D \frac{I_M}{N_M}$} (ID);
\draw[arr] (ID) -- node[lbl, above] {$\gamma_D$} (RD);

\draw[arr] (SM) -- node[lbl, above] {$\beta_M \frac{I_D}{N_D}$} (IM);
\draw[arr] (IM) -- node[lbl, above] {$\gamma_M$} (RM);

\draw[arr] (-1.5, 2.2) -- node[lbl, above] {$\Lambda_D$} (SD);
\draw[arr] (-1.5, -0.6) -- node[lbl, above] {$\Lambda_M$} (SM);

\draw[arr] (SD.south west) -- ++(0, -0.35) node[right, font=\scriptsize] {$\mu_D$};
\draw[arr] (ID.south west) -- ++(0, -0.35) node[right, font=\scriptsize] {$\mu_D$};
\draw[arr] (RD.south west) -- ++(0, -0.35) node[right, font=\scriptsize] {$\mu_D$};
\draw[arr] (SM.south west) -- ++(0, -0.35) node[right, font=\scriptsize] {$\mu_M$};
\draw[arr] (IM.south west) -- ++(0, -0.35) node[right, font=\scriptsize] {$\mu_M$};
\draw[arr] (RM.south west) -- ++(0, -0.35) node[right, font=\scriptsize] {$\mu_M$};

\draw[crossarr] (IM.north) -- node[lbl, right, red!70!black] {\textit{infects}} (ID.south);
\draw[crossarr] (ID.south) -- node[lbl, left, red!70!black] {\textit{infects}} (IM.north);

\node[font=\small\itshape, blue!60!black] at (-1.8, 2.8) {Data Layer};
\node[font=\small\itshape, orange!60!black] at (-1.8, 0.0) {Model Layer};

\end{tikzpicture}
\caption{Bilayer SIR schematic. Data corpora (top, blue) and AI models (bottom, orange) form two coupled epidemic layers. Solid arrows denote within-layer transitions; dashed red arrows denote cross-layer infection. Contaminated models ($I_M$) inject synthetic content into data corpora, while contaminated data ($I_D$) infects models during training.}
\label{fig:schematic}
\end{figure}

The system reads:
\begin{align}
\frac{d\SD}{dt} &= \LambdaD - \betaD \frac{\IM}{\NM} \SD - \muD \SD, \label{eq:dSD} \\
\frac{d\ID}{dt} &= \betaD \frac{\IM}{\NM} \SD - (\gammaD + \muD) \ID, \label{eq:dID} \\
\frac{d\RD}{dt} &= \gammaD \ID - \muD \RD, \label{eq:dRD} \\
\frac{d\SM}{dt} &= \LambdaM - \betaM \frac{\ID}{\ND} \SM - \muM \SM, \label{eq:dSM} \\
\frac{d\IM}{dt} &= \betaM \frac{\ID}{\ND} \SM - (\gammaM + \muM) \IM, \label{eq:dIM} \\
\frac{d\RM}{dt} &= \gammaM \IM - \muM \RM, \label{eq:dRM}
\end{align}
where $\ND = \SD + \ID + \RD$ and $\NM = \SM + \IM + \RM$ are total populations, with $\ND \to \LambdaD/\muD$ and $\NM \to \LambdaM/\muM$ at equilibrium.

\subsection{Basic Reproduction Number}
\label{sec:R0}

We derive $\Rzero$ using the Next Generation Matrix (NGM) method~\citep{diekmann1990,vandendriessche2002}.
The infected subsystem has state $(\ID, \IM)^T$.
At the disease-free equilibrium, $\SD^* = \LambdaD/\muD = \ND^*$ and $\SM^* = \LambdaM/\muM = \NM^*$.
The new infection rates for $\ID$ and $\IM$ involve cross-population ratios: $\SD^*/\NM^* = (\LambdaD \muM)/(\muD \LambdaM)$ and $\SM^*/\ND^* = (\LambdaM \muD)/(\muM \LambdaD)$.
This gives the new infection matrix $F$ and transition matrix $V$:
\begin{equation}
F = \begin{pmatrix} 0 & \betaD \frac{\LambdaD \muM}{\muD \LambdaM} \\[4pt] \betaM \frac{\LambdaM \muD}{\muM \LambdaD} & 0 \end{pmatrix}, \quad
V = \begin{pmatrix} \gammaD + \muD & 0 \\ 0 & \gammaM + \muM \end{pmatrix}.
\end{equation}

\begin{theorem}[Basic reproduction number]
\label{thm:R0}
The basic reproduction number of the bilayer SIR system \eqref{eq:dSD}--\eqref{eq:dRM} is
\begin{equation}
\boxed{\;\Rzero = \sqrt{\frac{\betaD \cdot \betaM}{(\gammaD + \muD)(\gammaM + \muM)}}\;}
\label{eq:R0}
\end{equation}
\end{theorem}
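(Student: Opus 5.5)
We follow the van den Driessche--Watmough recipe and take $\Rzero$ to be the spectral radius $\rho(FV^{-1})$ of the next generation matrix.

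\textbf{Verifying the hypotheses.} We first check that the decomposition above satisfies the standing assumptions (A1)--(A5) of \citep{vandendriessche2002}. We order the infected compartments as $x=(\ID,\IM)^T$. The new-infection vector is $\mathcal F=(\betaD \IM \SD/\NM,\ \betaM \ID \SM/\ND)^T$, and the transfer vector is $\mathcal V=((\gammaD+\muD)\ID,\ (\gammaM+\muM)\IM)^T$. These satisfy:
\begin{itemize}[leftmargin=*,itemsep=1pt,topsep=2pt]
\item all entries of $\mathcal F$ are nonnegative;
\item $\mathcal V^-$ vanishes when a compartment is empty;
\item both $\mathcal F$ and $\mathcal V$ vanish on the disease-free set $\{\ID=\IM=0\}$;
\item the disease-free subsystem $\dot\SD=\LambdaD-\muD\SD$, $\dot\SM=\LambdaM-\muM\SM$ (with $\RD,\RM\to0$) has the globally stable equilibrium $(\LambdaD/\muD,0,\LambdaM/\muM,0)$.
\end{itemize}

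\textbf{Computing $F$ and $V$.} We differentiate at the DFE. Since $\mathcal F_1$ is linear in $\IM$ with coefficient $\betaD\SD/\NM$, we evaluate that coefficient at $\SD^*,\NM^*$, and similarly for $\mathcal F_2$; this yields the matrix $F$ displayed above. $V$ is diagonal. The step that requires care is the evaluation of the cross-population ratios $\SD^*/\NM^*$ and $\SM^*/\ND^*$, which carry dimension-dependent factors. We observe that these factors are reciprocal:
\[
\frac{\LambdaD\muM}{\muD\LambdaM}\cdot\frac{\LambdaM\muD}{\muM\LambdaD}=1 .
\]

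\textbf{Spectral radius.} The matrix
\[
FV^{-1}=\begin{pmatrix}0& a/(\gammaM+\muM)\\ b/(\gammaD+\muD)&0\end{pmatrix},
\]
with $a=\betaD\frac{\LambdaD\muM}{\muD\LambdaM}$ and $b=\betaM\frac{\LambdaM\muD}{\muM\LambdaD}$, is anti-diagonal with nonnegative entries. Its characteristic polynomial is
\[
\lambda^2-\frac{ab}{(\gammaD+\muD)(\gammaM+\muM)} ,
\]
so its eigenvalues are $\pm\sqrt{ab/[(\gammaD+\muD)(\gammaM+\muM)]}$. By the reciprocity above, $ab=\betaD\betaM$, and hence the spectral radius equals the claimed expression \eqref{eq:R0}.

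\textbf{Main obstacle.} The main subtlety is interpretive rather than computational: the bipartite structure makes each generation pass through both layers, which is why the formula carries a square root. We will remark that $\Rzero^2$ is the two-step (data$\to$model$\to$data) reproduction number, and that the threshold $\Rzero<1 \iff \Rzero^2<1$ is unaffected. The cited theorem then gives local asymptotic stability of the DFE when $\Rzero<1$ and instability when $\Rzero>1$.
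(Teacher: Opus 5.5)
Your proposal is correct and follows the paper's proof essentially step for step. Both arguments compute $F$ with the cross-population ratios $\SD^*/\NM^*$ and $\SM^*/\ND^*$, note that $FV^{-1}$ is anti-diagonal so that $\lambda^2$ equals the product of its off-diagonal entries, and use the exact cancellation of those ratios to obtain \eqref{eq:R0}.

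Your explicit check of the van den Driessche--Watmough hypotheses, which the paper leaves implicit, is a useful addition. One correction there: condition (A5) concerns the Jacobian at the DFE with $\mathcal F$ set to zero, whose eigenvalues $-(\gammaD+\muD)$, $-(\gammaM+\muM)$, $-\muD$, $-\muM$ are all negative. It is not global stability of the disease-free subsystem.
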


\begin{proof}[Proof sketch]
$\Rzero = \rho(FV^{-1})$, the spectral radius of the next-generation matrix.
The off-diagonal entries of $FV^{-1}$ are $\frac{\betaD}{\gammaM + \muM} \cdot \frac{\LambdaD \muM}{\muD \LambdaM}$ and $\frac{\betaM}{\gammaD + \muD} \cdot \frac{\LambdaM \muD}{\muM \LambdaD}$.
Their product is $\frac{\betaD \betaM}{(\gammaD + \muD)(\gammaM + \muM)}$---the cross-population ratios cancel exactly---so $FV^{-1}$ has eigenvalues $\pm\sqrt{\betaD \betaM / [(\gammaD + \muD)(\gammaM + \muM)]}$.
The full derivation is in \cref{app:proofs}.
\end{proof}

The geometric mean structure of $\Rzero$ reflects the bilayer coupling: contamination must traverse both layers (data $\to$ model $\to$ data) to complete a ``generation.''

\begin{theorem}[Disease-free equilibrium and stability]
\label{thm:dfe}
The system \eqref{eq:dSD}--\eqref{eq:dRM} admits a unique disease-free equilibrium
\begin{equation}
\DFE = \left(\frac{\LambdaD}{\muD},\; 0,\; 0,\; \frac{\LambdaM}{\muM},\; 0,\; 0\right).
\label{eq:dfe}
\end{equation}
The DFE is locally asymptotically stable if and only if $\Rzero < 1$.
\end{theorem}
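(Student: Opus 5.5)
First, uniqueness. Set all right-hand sides of \eqref{eq:dSD}--\eqref{eq:dRM} to zero with $\ID=\IM=0$. Then \eqref{eq:dRD} and \eqref{eq:dRM} give $\RD=\RM=0$, and \eqref{eq:dSD}, \eqref{eq:dSM} give $\SD=\LambdaD/\muD$ and $\SM=\LambdaM/\muM$. This yields \eqref{eq:dfe} as the only equilibrium with no infection. I would also note that summing the three equations in each layer gives $\dot\ND=\LambdaD-\muD\ND$ and $\dot\NM=\LambdaM-\muM\NM$. So the totals converge to $\LambdaD/\muD$ and $\LambdaM/\muM$, and at the DFE $\ND^*=\SD^*$ and $\NM^*=\SM^*$.

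Next, stability. I would linearize at the DFE in the coordinates $(\SD,\ID,\RD,\SM,\IM,\RM)$. The ratios $\IM/\NM$ and $\ID/\ND$ have zero numerator at the DFE, so derivatives with respect to $\NM$ or $\ND$ drop out. The only nontrivial partials of the incidence terms are therefore $\partial(\betaD \IM\SD/\NM)/\partial\IM=\betaD\SD^*/\NM^*$, and symmetrically for the model layer.

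I would then order the variables as $(\ID,\IM;\SD,\RD,\SM,\RM)$. In this order the Jacobian is block upper-triangular. The infected block is $F-V$, with $F,V$ as given before \cref{thm:R0}. The uninfected block is lower-triangular with diagonal $-\muD,-\muD,-\muM,-\muM$. Those four eigenvalues are strictly negative, so local asymptotic stability is equivalent to $F-V$ being Hurwitz.

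For the $2\times2$ block I would compute directly. We have $\operatorname{tr}(F-V)=-(\gammaD+\muD)-(\gammaM+\muM)<0$. The determinant is $(\gammaD+\muD)(\gammaM+\muM)-F_{12}F_{21}$. By the cancellation already used in the proof of \cref{thm:R0}, $F_{12}F_{21}=\betaD\betaM$, so
\[
\det(F-V)=(\gammaD+\muD)(\gammaM+\muM)\,(1-\Rzero^2).
\]
A real $2\times2$ matrix with negative trace is Hurwitz if and only if its determinant is positive, that is, if and only if $\Rzero<1$. This gives the "if" direction.

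For the converse, when $\Rzero>1$ the determinant is negative, so there is a positive real eigenvalue and the DFE is unstable. Alternatively, both directions follow from Theorem 2 of van den Driessche--Watmough; its hypotheses (A1)--(A5) are straightforward to check here.

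The main obstacle is the borderline case $\Rzero=1$. There $F-V$ has a zero eigenvalue, linearization is inconclusive, and the DFE is not locally \emph{asymptotically} stable in the linear sense. I would handle this by reading "if and only if" as the statement about hyperbolic linear stability, which is exactly what the van den Driessche--Watmough theorem gives: stable for $\Rzero<1$, unstable for $\Rzero>1$. I would defer the degenerate case to the bifurcation analysis via Castillo-Chavez--Song, and state this caveat explicitly.
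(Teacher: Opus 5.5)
Your proposal is correct and follows essentially the same route as the paper: set $\ID=\IM=0$ to get the DFE, use the block-triangular Jacobian (eigenvalues $-\muD,-\muD,-\muM,-\muM$ plus those of the infected block), and combine negative trace with $\det = (\gammaD+\muD)(\gammaM+\muM)-\betaD\betaM$, where the cross-population ratios cancel, to get Hurwitz iff $\Rzero<1$. Two side notes: with your ordering $(\ID,\IM;\SD,\RD,\SM,\RM)$ the Jacobian is block \emph{lower}-triangular rather than upper, which does not affect the spectrum. Your explicit caveat about the nonhyperbolic case $\Rzero=1$ is a point the paper's proof passes over silently.
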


\begin{proof}[Proof sketch]
Setting $\ID = \IM = 0$ in \eqref{eq:dSD}--\eqref{eq:dRM} yields the DFE.
The Jacobian at the DFE is block-triangular; the infected block has eigenvalues with negative real parts iff $\Rzero < 1$.
This is a standard result for coupled epidemic systems~\citep{vandendriessche2002}; we verify numerically for 200 random configurations.
Details in \cref{app:proofs}.
\end{proof}

\subsection{Equilibria and Bifurcation}
\label{sec:equilibria}

\begin{proposition}[Endemic equilibrium existence]
\label{thm:ee}
When $\Rzero > 1$, the system \eqref{eq:dSD}--\eqref{eq:dRM} admits at least one endemic equilibrium $\EE = (\SD^*, \ID^*, \RD^*, \SM^*, \IM^*, \RM^*)$ with $\ID^* > 0$ and $\IM^* > 0$.
\end{proposition}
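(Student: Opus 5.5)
The plan is to reduce the equilibrium conditions to a scalar fixed-point problem in $\ID$ and solve it with the intermediate value theorem. Uniqueness would even follow from concavity, although the statement only asks for existence.

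\emph{Step 1: total populations.} I first pin down the total populations at any equilibrium. Summing \eqref{eq:dSD}--\eqref{eq:dRD} gives $d\ND/dt = \LambdaD - \muD \ND$, and similarly $d\NM/dt = \LambdaM - \muM\NM$. So every equilibrium has $\ND = \LambdaD/\muD$ and $\NM = \LambdaM/\muM$. I substitute these constants into the force-of-infection terms. Writing $x = \ID$ and $y = \IM$, setting \eqref{eq:dSD} to zero gives
\[
\SD = \frac{\LambdaD \NM}{\muD \NM + \betaD y}.
\]
Plugging this into \eqref{eq:dID} gives
\[
x = f(y) := \frac{\LambdaD \betaD\, y}{(\gammaD+\muD)(\muD\NM + \betaD y)} .
\]
Symmetrically, from \eqref{eq:dSM}--\eqref{eq:dIM},
\[
y = g(x) := \frac{\LambdaM \betaM\, x}{(\gammaM+\muM)(\muM\ND + \betaM x)} .
\]
Both $f$ and $g$ are Michaelis--Menten type maps on $[0,\infty)$. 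Each vanishes at $0$, is strictly increasing and strictly concave, and is bounded above (by $\LambdaD/(\gammaD+\muD)$ and $\LambdaM/(\gammaM+\muM)$ respectively).

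\emph{Step 2: scalar fixed point.} Endemic equilibria correspond exactly to positive fixed points of $h := f\circ g$, via $x^* = h(x^*)$ and $y^* = g(x^*)$. The composition $h$ is again increasing, concave, bounded, and satisfies $h(0)=0$. The key computation is the derivative at the origin:
\[
h'(0) = f'(0)\,g'(0) = \frac{\LambdaD\betaD}{(\gammaD+\muD)\muD\NM}\cdot\frac{\LambdaM\betaM}{(\gammaM+\muM)\muM\ND}.
\]
Inserting $\ND=\LambdaD/\muD$ and $\NM=\LambdaM/\muM$, the population ratios cancel exactly as in the proof of \cref{thm:R0}, leaving $h'(0) = \Rzero^2$.

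\emph{Step 3: existence.} Let $\phi(x) = h(x) - x$. If $\Rzero>1$, then $\phi'(0) = \Rzero^2 - 1 > 0$, so $\phi>0$ for small $x>0$. Since $h$ is bounded, $\phi(x)<0$ for large $x$. The intermediate value theorem then gives $x^*>0$ with $h(x^*)=x^*$. Strict concavity of $\phi$ shows this positive root is in fact unique.

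\emph{Step 4: reconstruct the equilibrium.} Set $y^* = g(x^*) >0$, and define $\SD^*$ and $\SM^*$ from the formulas in Step 1 (both positive). Then set $\RD^* = \gammaD x^*/\muD$ and $\RM^* = \gammaM y^*/\muM$. Finally, I verify consistency: the compartments must sum to the assumed totals $\LambdaD/\muD$ and $\LambdaM/\muM$. This follows because adding the three equilibrium equations of each layer reproduces $\LambdaD - \muD \ND = 0$ (respectively $\LambdaM - \muM\NM = 0$).

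The main obstacle is this self-consistency. Fixing $\ND$ and $\NM$ first, then solving, must not be circular. Step 1 settles it by noting that the totals are forced at any equilibrium, independently of the infection terms.
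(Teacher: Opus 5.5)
Your proof is correct and follows the same route as the paper. You reduce the equilibrium conditions to a scalar equation in $\ID^*$ and use the cancellation of the population ratios to tie the behaviour at the origin to $\Rzero^2$; the paper's $h$ is, up to the factor $-(\gammaD+\muD)$, your $\phi(x)/x$, so its $h(0)=(\gammaD+\muD)(1-\Rzero^2)$ corresponds to your $\phi'(0)=\Rzero^2-1$. Both arguments then conclude by the intermediate value theorem.

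Your execution is tighter than the paper's sketch in two respects. At the far end you use the boundedness of $f\circ g$, which correctly places any equilibrium below $\LambdaD/(\gammaD+\muD)$; the paper instead asserts $h\to+\infty$ as $\ID^*\to\LambdaD/\muD$. And the strict concavity of $f\circ g$ proves the uniqueness that the paper only supports numerically.
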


\begin{proof}[Proof sketch]
Substituting equilibrium conditions reduces the system to a scalar equation $h(\ID^*) = 0$.
We show $h(0) < 0$ when $\Rzero > 1$ and $h \to +\infty$ as $\ID^* \to \LambdaD/\muD$, so by the intermediate value theorem at least one positive root exists.
Numerical confirmation: of 200 random parameter configurations, 169 satisfy the precondition $\Rzero > 1$; all 169 (100\%) yield a unique positive root, and all 31 configurations with $\Rzero \le 1$ correctly show no positive root.
Details in \cref{app:proofs}.
\end{proof}

\begin{proposition}[Transcritical bifurcation]
\label{thm:bifurcation}
The system \eqref{eq:dSD}--\eqref{eq:dRM} undergoes a transcritical bifurcation at $\Rzero = 1$.
As $\Rzero$ increases through 1, the DFE loses stability and exchanges it with the endemic equilibrium.
\end{proposition}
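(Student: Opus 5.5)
The natural tool is the Castillo-Chavez--Song center-manifold criterion \citep{castillochavez2004}, built on Sotomayor's theorem. We choose $\betaD$ as the bifurcation parameter, with all other rates fixed. The critical value is $\betaD^\ast = (\gammaD+\muD)(\gammaM+\muM)/\betaM$, at which $\Rzero = 1$ by \cref{thm:R0}.

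First, we reduce the dimension. The totals satisfy $\dot\ND = \LambdaD - \muD\ND$ and $\dot\NM = \LambdaM - \muM\NM$, so they converge exponentially to $\LambdaD/\muD$ and $\LambdaM/\muM$. On this invariant, attracting limiting set we may eliminate $\RD$ and $\RM$ and fix $\ND,\NM$ at their equilibrium values. This leaves the four-dimensional system in $(\SD,\ID,\SM,\IM)$.

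At the DFE of \cref{thm:dfe}, the Jacobian $J$ is block-triangular. The uninfected block has eigenvalues $-\muD$ and $-\muM$. The infected block is $F - V$. At $\betaD=\betaD^\ast$, $\det(F-V)=0$ and $\operatorname{tr}(F-V)<0$, so $J$ has a simple zero eigenvalue and all other eigenvalues have negative real parts. This is exactly the setting of Theorem~4.1 in \citep{castillochavez2004}.

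Next we compute the eigenvectors of $J$ at criticality. For the right eigenvector $w$, take $w_{\ID}=1$ and $w_{\IM}=(\gammaD+\muD)\NM^\ast/(\betaD^\ast\SD^\ast)>0$. The susceptible components are $w_{\SD}=-\betaD^\ast w_{\IM}\SD^\ast/(\muD\NM^\ast)<0$ and $w_{\SM}<0$, defined analogously. For the left eigenvector $v$, the susceptible components vanish and the infected components are positive: $v_{\ID}=1$ and $v_{\IM}=(\gammaD+\muD)/(\betaM\SM^\ast/\ND^\ast)$.

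We then evaluate the two coefficients of the criterion:
\[
a=\sum_{k,i,j} v_k w_i w_j \frac{\partial^2 f_k}{\partial x_i\partial x_j},\qquad
b=\sum_{k,i} v_k w_i \frac{\partial^2 f_k}{\partial x_i\partial\betaD}.
\]
The only nonzero second derivatives at the DFE are the bilinear infection terms: $\partial^2 f_{\ID}/\partial\SD\partial\IM=\betaD^\ast/\NM^\ast$ and $\partial^2 f_{\IM}/\partial\SM\partial\ID=\betaM/\ND^\ast$. Since $v_{\ID},v_{\IM}>0$, $w_{\ID},w_{\IM}>0$ and $w_{\SD},w_{\SM}<0$, every term in
\[
a = 2v_{\ID}w_{\SD}w_{\IM}\frac{\betaD^\ast}{\NM^\ast} + 2v_{\IM}w_{\SM}w_{\ID}\frac{\betaM}{\ND^\ast}
\]
is negative, so $a<0$. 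Also $b = v_{\ID}w_{\IM}\SD^\ast/\NM^\ast>0$. With $a<0<b$, the criterion gives a forward (transcritical) bifurcation. As $\betaD$ crosses $\betaD^\ast$, the DFE loses stability, and a locally asymptotically stable positive equilibrium emerges near it; this must be the EE of \cref{thm:ee}. Because $\Rzero$ is strictly increasing in $\betaD$, the statement holds as phrased in terms of $\Rzero$.

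The main obstacle is the reduction step. We need to justify that working on the limiting system with $\ND,\NM$ frozen is legitimate. This follows from the asymptotically autonomous/triangular structure, since the totals decouple exactly.

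A second point needs care. If $\ND$ and $\NM$ are instead kept as dynamic variables inside the incidence, extra second-derivative terms arise from $\partial(1/\NM)/\partial\IM$ and its analogue. These terms are again negative (they carry factors $-\betaD\SD^\ast/\NM^{\ast2}$), so $a<0$ persists. Checking this sign bookkeeping is the step I would verify most carefully.
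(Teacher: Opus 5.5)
Your proof is correct and follows the same route as the paper: Sotomayor's theorem in the Castillo-Chavez--Song form, with $\betaD$ as the bifurcation parameter and explicit left/right null vectors at $\Rzero=1$. It is also more complete, because the paper only asserts that the transversality quantities are nonzero and relies on numerics for the direction, whereas your signs $a<0<b$ establish the forward bifurcation and the exchange of stability outright. On the step you flagged: in the unreduced six-dimensional system the extra $1/\NM$ terms are not all negative (the cross term with $w_{\SM}<0$ is positive), but they sum to $-2v_{\ID}\betaD\SD^{*}w_{\IM}(w_{\SM}+w_{\IM}+w_{\RM})/\NM^{*2}$, which vanishes because $\dot{N}_M=\LambdaM-\muM N_M$ forces $w_{\SM}+w_{\IM}+w_{\RM}=0$ (likewise in the data layer), so $a$ is exactly the value from your reduced system.
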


\begin{proof}[Proof sketch]
We apply Sotomayor's theorem~\citep{castillochavez2004,perko2001}, verifying the three transversality conditions at $\Rzero = 1$.
Numerical verification: of 200 random configurations, 196 (98\%) show the expected transcritical behavior---endemic equilibrium present when $\Rzero > 1$ and absent when $\Rzero < 1$. The 4 borderline cases ($\Rzero \in [0.99, 1.01]$) fall within the numerical tolerance of the root finder near the bifurcation point.
Details in \cref{app:proofs}.
\end{proof}

\cref{fig:ode_trajectory} shows the ODE trajectory under baseline parameters ($\Rzero = 2.62$), illustrating convergence to the endemic equilibrium with both $\ID^*$ and $\IM^*$ positive.

\begin{figure}[t]
\centering
\includegraphics[width=0.85\linewidth]{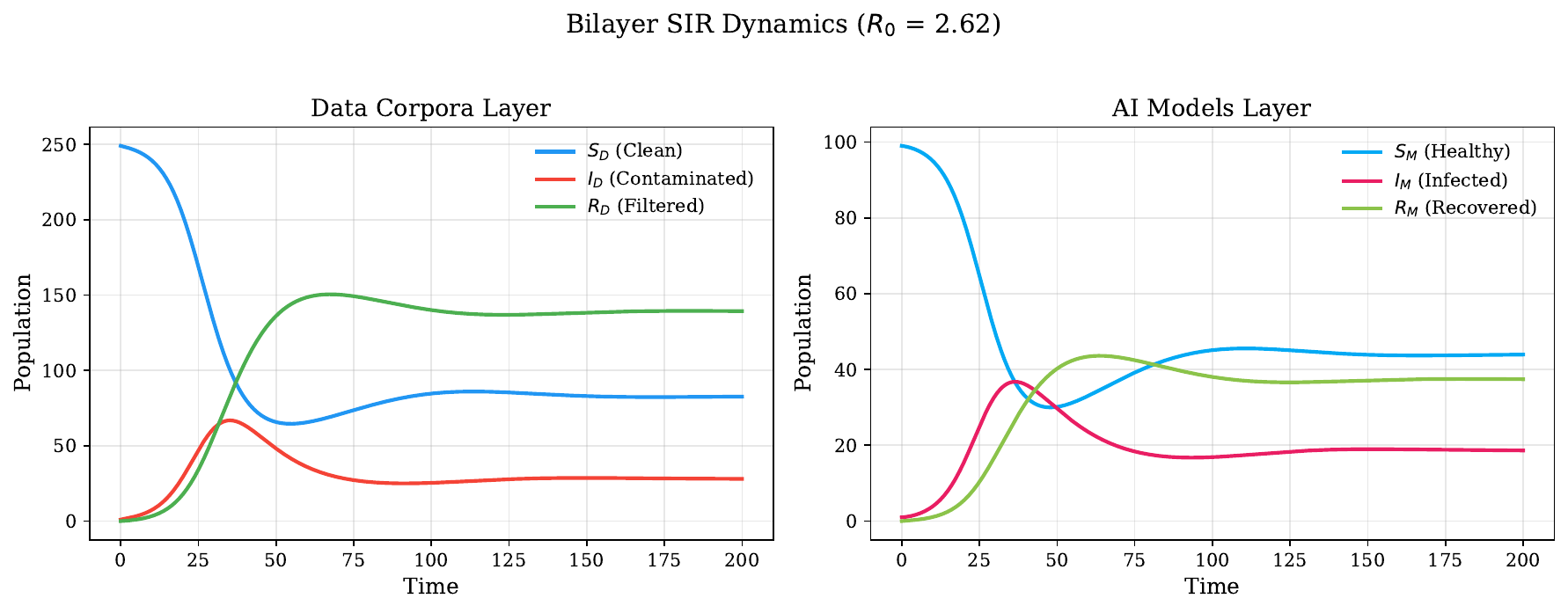}
\caption{ODE trajectory under baseline parameters ($\Rzero = 2.62$). Both data and model infection fractions converge to a positive endemic equilibrium, consistent with \cref{thm:ee}. The initial transient captures the early growth phase; damped oscillations are visible before settling.}
\label{fig:ode_trajectory}
\end{figure}

\subsection{SIRS Extension (Primary Recommendation)}
\label{sec:sirs}

The base SIR assumes permanent immunity, but cleaned corpora are re-contaminated and retrained models re-exposed.
We add waning immunity at rate $\delta > 0$ ($\RD \to \SD$, $\RM \to \SM$).
Since $\delta$ affects only $R \to S$ transitions (not the infected subsystem linearized at the DFE), $\Rzero$ is \emph{identical} for SIR and SIRS: all threshold results (Theorems~1--2, Propositions~3--4) apply directly.
The key difference is the endemic equilibrium's approach dynamics:

\begin{proposition}[SIRS oscillatory dynamics]
\label{prop:sirs}
When $\delta > 0$ and $\Rzero > 1$, the SIRS variant can exhibit damped oscillatory convergence toward the endemic equilibrium.
\end{proposition}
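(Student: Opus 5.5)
The claim is existential ("can exhibit"), so one explicit parameter regime where the Jacobian at the endemic equilibrium has a complex-conjugate pair of eigenvalues with negative real part, all other eigenvalues also having negative real part, suffices. The plan is a linearization argument.

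First, write the SIRS system: add $+\delta \RD$ to $d\SD/dt$ and $-\delta\RD$ to $d\RD/dt$, and likewise in the model layer. Totals still satisfy $\dot N_D=\LambdaD-\muD\ND$ and $\dot N_M=\LambdaM-\muM\NM$, so on the invariant, attracting set $\ND=\LambdaD/\muD$, $\NM=\LambdaM/\muM$ we eliminate $\RD$ and $\RM$. This leaves a four-dimensional system in $(\SD,\ID,\SM,\IM)$, with incidence terms $\betaD \IM\SD/\NM^*$ and $\betaM\ID\SM/\ND^*$. The two eliminated directions contribute eigenvalues $-\muD$ and $-\muM$, which are real and negative.

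Existence of an endemic equilibrium for $\Rzero>1$ follows as in \cref{thm:ee}. The $R$-equations now give $\RD^*=\gammaD\ID^*/(\muD+\delta)$, and the same scalar-root argument goes through, since $\delta$ only rescales the recovered pool. Next, form the $4\times 4$ Jacobian $J$ at $\EE$ and its characteristic polynomial $p(\lambda)=\lambda^4+a_1\lambda^3+a_2\lambda^2+a_3\lambda+a_4$.

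Local asymptotic stability follows from Routh--Hurwitz: $a_i>0$, $a_1a_2>a_3$, and $a_1a_2a_3>a_3^2+a_1^2a_4$. Oscillation (complex roots) holds when the discriminant of $p$ is negative, or more simply whenever one exhibits a complex pair numerically or asymptotically.

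The cleanest route to a rigorous complex pair is a perturbative regime. Take $\delta$ large relative to $\muD,\muM$ and the turnover small. This is the classical SIRS setting where the endemic relaxation has eigenvalues approximately $-\tfrac{\epsilon}{2}\pm i\sqrt{\omega^2-\epsilon^2/4}$, with $\omega^2$ proportional to the product of force of infection and recovery rate. Concretely, take the symmetric case $\betaD=\betaM$, $\gammaD=\gammaM$, $\muD=\muM$, $\LambdaD/\muD=\LambdaM/\muM$. The invariant diagonal $\SD=\SM$, $\ID=\IM$ then reduces the problem to a planar classical SIRS system, plus an antisymmetric block whose eigenvalues can be shown to be real and negative (or at least stable). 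For the planar SIRS system, compute trace $T$ and determinant $\Delta$ at the endemic point explicitly. Then show $T<0$, $\Delta>0$ and $T^2<4\Delta$ for suitable parameters, for example $\gamma\gg\mu$ with $\Rzero$ moderately above 1. This gives a stable focus, hence damped oscillations. By continuity of eigenvalues, the property persists on an open neighborhood of asymmetric parameters.

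The main obstacle is the antisymmetric block: verifying that its eigenvalues have negative real part, so that $\EE$ is truly attracting and the oscillation is a damped approach rather than a transient near a saddle. I would first compute this $2\times2$ block explicitly. Because the cross terms flip sign, I expect its trace to be more negative and its determinant positive, giving stability. If that computation becomes messy, I would fall back on checking Routh--Hurwitz numerically at the baseline parameters. That check is consistent with the damped oscillations visible in \cref{fig:ode_trajectory}, and it suffices for the existential "can exhibit" claim.
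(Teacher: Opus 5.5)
Your proposal is correct, and it takes a genuinely different route from the paper. The paper gives no analytic argument for this proposition: it integrates the SIRS system for 50 random configurations with $\delta\in[0.01,0.2]$ and $\Rzero>1$, observes damped oscillations in the infected trajectories, and adds the heuristic that waning immunity refills the susceptible pool. You instead read ``can exhibit'' existentially and prove it by linearization. Your reduction to the invariant attracting set $\ND=\LambdaD/\muD$, $\NM=\LambdaM/\muM$, which splits off the eigenvalues $-\muD,-\muM$, is right. The step you flagged as the main obstacle closes at once. Take $N=1$, $b=\beta I^*$, $c=\gamma+\mu=\beta S^*$ and $a=b+\mu+\delta$; then the symmetric and antisymmetric blocks are
\[
J_+=\begin{pmatrix}-a & -(c+\delta)\\ b & 0\end{pmatrix},\qquad
J_-=\begin{pmatrix}-a & c-\delta\\ b & -2c\end{pmatrix}.
\]
$J_-$ has trace $-(a+2c)<0$ and determinant $bc+2c(\mu+\delta)+b\delta>0$, so no numerical fallback is needed. $J_+$ is a stable focus iff $a^2<4b(c+\delta)$. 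For example, $\beta=0.3$, $\gamma=0.1$, $\mu=0.02$, $\delta=0.05$ (so $\Rzero=2.5$) gives $a^2\approx 0.021<0.050$. The equilibrium is then hyperbolic, so the implicit function theorem and continuity of eigenvalues extend the conclusion to an open set of asymmetric parameters. Moreover, symmetric initial data stay on the invariant diagonal and spiral into the equilibrium as a planar stable focus, which gives the trajectory-level statement and not just a linear one. Your route buys an actual proof and an explicit criterion. That criterion exposes two things the simulations hide. First, oscillation is not automatic for every $\delta>0$: in your symmetric family all eigenvalues become real for large $\delta$, which is the SIS-like limit. Second, $\delta>0$ is not needed, since $\delta=0$ gives the classical damped SIR focus. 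That is why \cref{fig:ode_trajectory}, which you cite as supporting evidence, already shows oscillations: it is the SIR baseline, and the SIRS trajectories are in \cref{fig:sirs_osc_app}. The paper's route buys coverage of random asymmetric configurations far from your symmetric locus, and it observes behavior from actual initial conditions, but only as evidence rather than proof.
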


\noindent Verified numerically on 50 random SIRS configurations (\cref{app:sirs}).
This predicts periodic contamination resurgence even after cleanup campaigns.

\paragraph{Why bilayer?}
A single-population model treats contamination as spreading within one pool.
The bilayer's \emph{cross-layer coupling} makes explicit that interventions on one layer (e.g., data detection) reduce infection in the other (models): the geometric-mean $\Rzero$ can be driven subcritical by acting on \emph{either} layer (\cref{app:bilayer_vs_single}).
This cross-layer leverage is the basis for the intervention analysis (\cref{sec:interventions}).

\paragraph{Source-diversity hypothesis.}
When $K > 1$ models contribute to the contaminated pool, we \emph{hypothesize} that heterogeneous sources attenuate effective contamination.
We model this as $\betaM^{\text{eff}}(K) = \betaM / f(K)$ with $f(1) = 1$ and $f$ increasing, yielding $\Rzero(K) = \Rzero(1) / \sqrt{f(K)}$.
This is an auxiliary modeling assumption motivated by heterogeneous mixing in epidemic theory~\citep{hethcote2000}, not a derivation from the base ODE; the matched-budget experiment in \cref{sec:multimodel} provides a direct empirical test (\cref{app:diversity_ext}).

%% file: sections/calibration.tex
\section{Calibration and Sensitivity Analysis}
\label{sec:calibration}

We calibrate the model using public data on AI text prevalence and development practices.
The calibration is \emph{scenario-based and illustrative}: it shows supercritical dynamics are consistent with current trends, not that the ecosystem's $\Rzero$ has been measured.

\subsection{Parameter Estimation}
\label{sec:param_est}

Each parameter is derived from public data under simplifying assumptions:

\begin{itemize}[leftmargin=*,itemsep=2pt]
    \item $\betaD \approx 0.217$: Log-linear regression on 6 AI text prevalence data points (2023--2025)~\citep{thompson2024,liang2024}, using the approximate relation $\betaD \approx r/12 + \gammaD + \muD$ from SIR linearization (adequate for scenario-level estimation; the bilayer growth rate depends jointly on all parameters). CI: $[0.201, 0.231]$. The baseline scenario uses $\betaD = 0.216$ (rounded).
    \item $\gammaD = 0.099$: Order-of-magnitude product of detection recall ($\sim$85\%~\citep{tang2024}) and deployment coverage ($\sim$12\% of platforms). Actual removal efficacy depends on recall, latency, and platform pipelines---none quantified at ecosystem scale.
    \item $\betaM = 0.340$: Product of training frequency (fraction of models retrained per month) and exposure probability (fraction of training data from web sources).
    \item $\gammaM = 0.060$: Estimated clean-retraining rate (fraction of contaminated models retrained on verified-clean data per month).
    \item $\muD = 0.02$, $\muM = 0.03$: Data obsolescence and model retirement rates, respectively.
    \item $\LambdaD = 5.0$, $\LambdaM = 3.0$: Creation rates (corpora and models per month).
\end{itemize}

\subsection{Scenario Analysis}
\label{sec:scenarios}

We define three scenarios spanning the plausible parameter range (\cref{tab:scenarios}).
All three yield $\Rzero > 1$, though the optimistic scenario is only marginally supercritical.

\begin{table}[t]
\centering
\caption{Calibration scenarios. All scenarios yield $\Rzero > 1$. These are illustrative estimates, not precise ecosystem measurements. Data sources: AI text prevalence~\citep{thompson2024,liang2024}; detection accuracy~\citep{tang2024}; training practices from public model documentation.}
\label{tab:scenarios}
\small
\begin{tabular}{@{}lccccc@{}}
\toprule
\textbf{Scenario} & $\betaD$ & $\gammaD$ & $\betaM$ & $\gammaM$ & $\Rzero$ \\
\midrule
Optimistic  & 0.201 & 0.200 & 0.240 & 0.150 & \textbf{1.10} \\
Baseline    & 0.216 & 0.099 & 0.340 & 0.060 & \textbf{2.62} \\
Pessimistic & 0.231 & 0.020 & 0.380 & 0.020 & \textbf{6.63} \\
\bottomrule
\end{tabular}
\end{table}

\cref{fig:R0_heatmap} shows $\Rzero$ as a function of $\betaD$ and $\betaM$ with the three scenario points overlaid.
The $\Rzero = 1$ contour separates subcritical from supercritical regimes; all three scenarios lie above it.

\begin{figure}[t]
\centering
\includegraphics[width=0.8\linewidth]{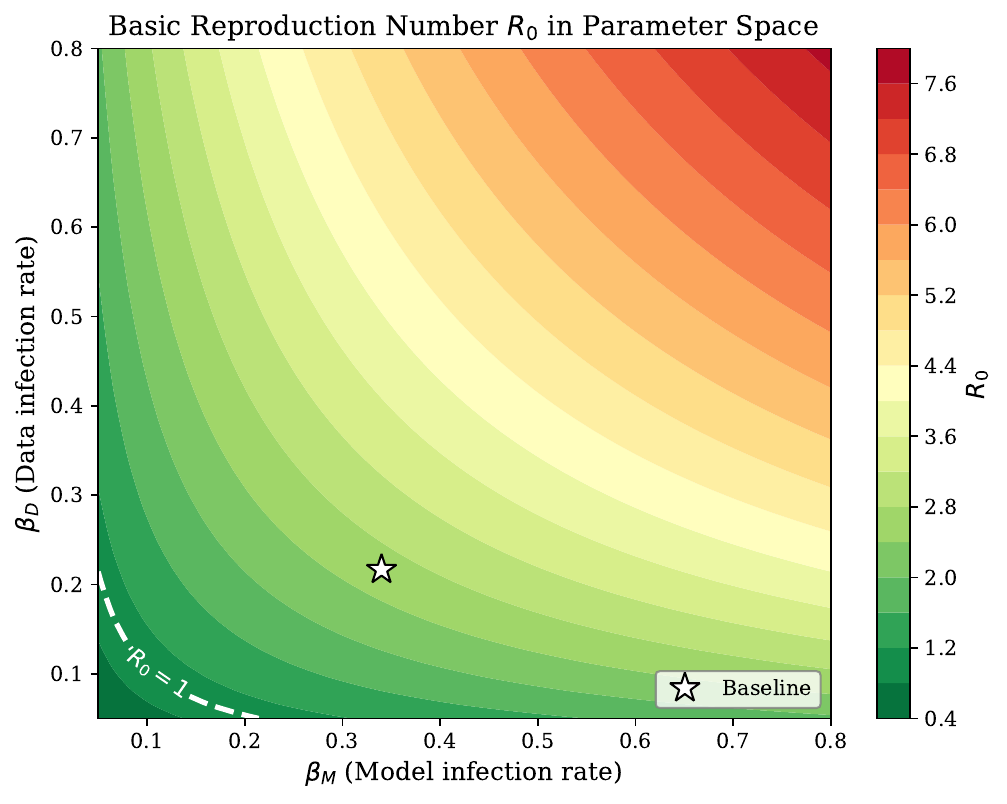}
\caption{$\Rzero$ as a function of data infection rate ($\betaD$) and model infection rate ($\betaM$), with other parameters at baseline values. The thick contour marks $\Rzero = 1$. All three calibration scenarios (markers) lie in the supercritical region, though the optimistic scenario is near the boundary.}
\label{fig:R0_heatmap}
\end{figure}

\paragraph{Uncertainty propagation.}
The Sobol sensitivity analysis (\cref{sec:sensitivity}) uses Saltelli's quasi-random sampling scheme with $N = 512$ base samples over uniform ranges for all 6 parameters, producing $N(2k+2) = 7{,}168$ total evaluations.
Over these samples, we obtain mean $\Rzero = 2.94$, median $2.65$, and $\Prob(\Rzero > 1) = 98.2\%$.
This probability is conditional on assumed parameter ranges---it quantifies the fraction of plausible scenarios that are supercritical, not a frequentist statement about the real ecosystem.

\subsection{Sensitivity Analysis}
\label{sec:sensitivity}

We perform variance-based global sensitivity analysis using Sobol indices~\citep{saltelli2010} over all 6 ODE parameters ($\betaD$, $\gammaD$, $\betaM$, $\gammaM$, $\muD$, $\muM$) with $N = 512$ base samples to identify which parameters most strongly influence $\Rzero$.
Total-order sensitivity indices ($S_T$) are:

\begin{center}
\small
\begin{tabular}{lcc}
\toprule
\textbf{Parameter} & $S_T$ & \textbf{Interpretation} \\
\midrule
$\gammaD$ (data recovery/detection) & \textbf{0.324} & Highest leverage \\
$\betaD$ (data contamination rate) & 0.280 & Second highest \\
$\gammaM$ (model recovery/retraining) & 0.277 & Third \\
$\betaM$ (model infection rate) & 0.204 & Fourth \\
$\muM$ (model retirement) & 0.032 & Negligible \\
$\muD$ (data obsolescence) & 0.022 & Negligible \\
\bottomrule
\end{tabular}
\end{center}

The detection and removal of contaminated data ($\gammaD$) is the single most influential parameter, followed by the data contamination rate ($\betaD$) and model recovery rate ($\gammaM$).
Turnover rates ($\muD$, $\muM$) contribute negligibly ($S_T < 0.04$).
This is a \emph{model-conditional} finding: it holds if the bilayer structure and assumed parameter ranges are approximately correct.
The sensitivity ranking is a property of $\Rzero$'s algebraic form and the assumed ranges, not an empirically measured elasticity.
Full Sobol analysis details appear in \cref{app:calibration}.

%% file: sections/abm.tex
\section{Stochastic Consistency Check}
\label{sec:abm}

The ODE system \eqref{eq:dSD}--\eqref{eq:dRM} assumes homogeneous mixing.
We test this via a stochastic agent-based model (ABM) on an explicit bipartite network---a \emph{consistency check}, not independent validation.

\paragraph{Design.}
\label{sec:abm_design}
The ABM operates on a bipartite random graph ($|\mathcal{D}|{=}100$ data nodes, $|\mathcal{M}|{=}50$ model nodes, edge probability $p_{\text{edge}}{=}0.8$).
Each susceptible data node is infected with probability $\betaD$ times the traffic-weighted fraction of infected model neighbors; each susceptible model node with probability $\betaM$ times the fraction of infected data neighbors.
Recovery, turnover, and immunity waning follow per-node Bernoulli trials.
We run 20 realizations per configuration.

\paragraph{ODE--ABM agreement.}
\label{sec:ode_abm}
Under baseline parameters (\cref{fig:ode_abm}), agreement is strong: $R^2 = 0.976$ (data layer), $R^2 = 0.968$ (model layer), NRMSE of 0.044 and 0.054.

\begin{figure}[t]
\centering
\includegraphics[width=0.85\linewidth]{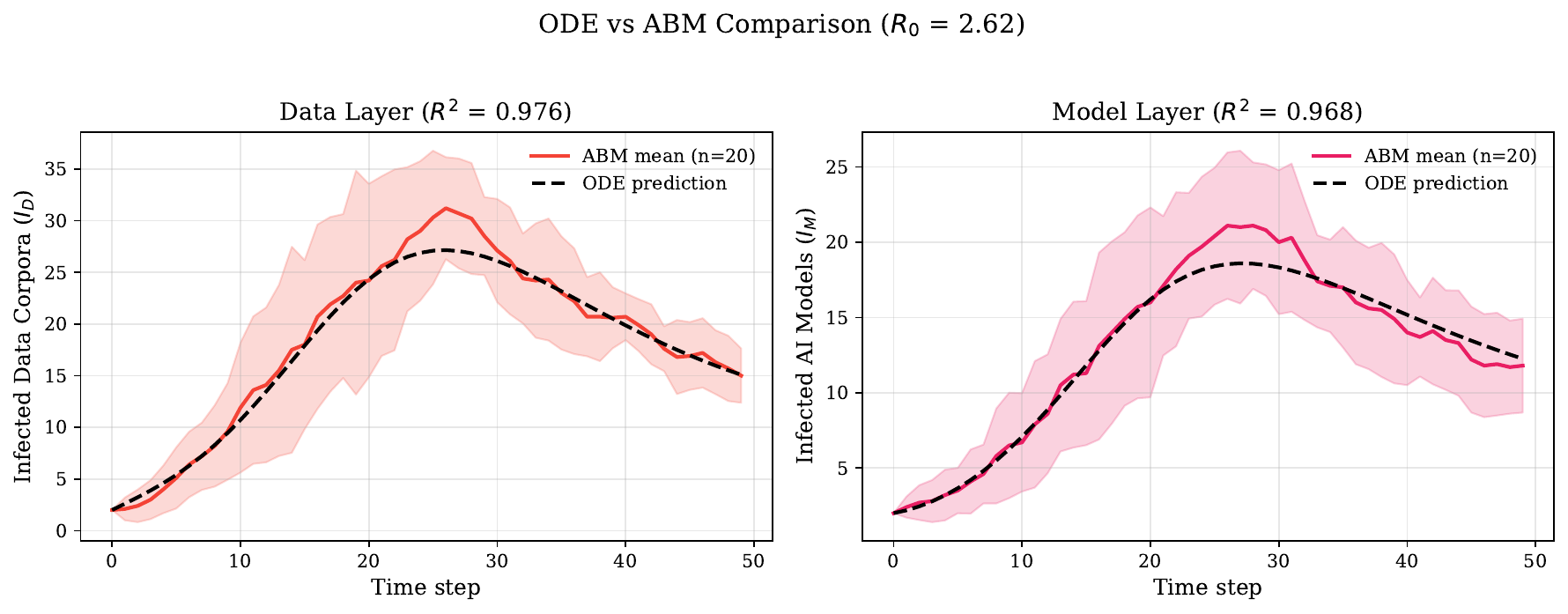}
\caption{ODE (solid) vs.\ ABM ensemble mean (dashed, $\pm 1$ std shaded) for infection fractions at $p_{\text{edge}} = 0.8$. $R^2 = 0.976$ (data), $0.968$ (model).}
\label{fig:ode_abm}
\end{figure}

\paragraph{Where mean-field breaks down.}
\label{sec:abm_robustness}
As $p_{\text{edge}}$ decreases from 0.9 to 0.3, $R^2$ degrades from 0.99 to 0.84, with non-monotonic behavior at $p_{\text{edge}} = 0.7$ ($R^2 = 0.61$).
Superspreader nodes (20\% of models with $10\times$ traffic) degrade agreement to $R^2 = 0.29$; active detectors break it further.

\paragraph{Threshold verification.}
A 20-point sweep across $\betaD \in [0.01, 0.60]$ (spanning subcritical to supercritical regimes), with 15 realizations each, correctly identifies the qualitative regime in 18/20 cases; the two errors occur near $\Rzero \approx 1$.

The ODE is a useful approximation for dense, homogeneous ecosystems but should not be applied without modification to settings with strong structural heterogeneity.
The dense regime is a reasonable first approximation given that dominant web-crawl corpora (Common Crawl, C4) are shared across most training pipelines, though emerging proprietary data pipelines may push toward sparser regimes.

%% file: sections/empirical.tex
\section{Empirical Bridge: GPT-2 Contamination Chains}
\label{sec:empirical}

We now ask whether actual LLM contamination chains exhibit dynamics qualitatively consistent with the SIR threshold picture.
This section provides an \emph{empirical bridge}, not a model fit: we do not measure $S/I/R$ compartment fractions or fit ODE parameters to trajectories.
Instead, single-chain experiments (\cref{sec:exp_design}--\cref{sec:sir_mapping}) test qualitative predictions---dose-response, near-plateau at partial contamination, supercritical growth at full contamination---while matched-budget experiments (\cref{sec:multimodel}) test the bilayer-specific prediction that multi-source mixing attenuates collapse.
All experiments use GPT-2 (124M); whether these patterns hold at larger scale is an open question (\cref{sec:interventions}).

\subsection{Experimental Design}
\label{sec:exp_design}

We construct contamination chains using GPT-2 (124M parameters)~\citep{radford2019language} on two domains: WikiText-103~\citep{merity2016pointer} (encyclopedic text) and Tiny Shakespeare (literary/archaic English).

\paragraph{Chain construction.}
Generation $G_0$ is fine-tuned on $n = 2{,}000$ real samples (WikiText) or $n = 1{,}500$ (Shakespeare).
For each subsequent generation $G_t$ ($t = 1, \ldots, 7$), we:
(i) sample $n$ texts from $G_{t-1}$;
(ii) mix them with fresh real samples at contamination fraction $\alpha \in \{0.00, 0.25, 0.50, 0.75, 1.00\}$;
(iii) fine-tune a fresh GPT-2 checkpoint on the mixture.
The $\alpha = 0$ chain serves as a \emph{size-matched control}: it undergoes the same training budget and procedure with purely real data, isolating contamination effects from training-budget drift.

\paragraph{Protocol.}
Each $(\alpha, \text{generation})$ pair is run with 3 independent seeds.
Training uses 3 epochs, batch size 8, learning rate $5 \times 10^{-5}$, and maximum sequence length 128 tokens.
Evaluation computes perplexity on 500 held-out real samples.
Single-chain total: 120 runs (WikiText: $5 \times 3 \times 8$) $+$ 72 runs (Shakespeare: $3 \times 3 \times 8$) $= 192$ fine-tuning runs. The source-diversity experiments (\cref{sec:multimodel}) add 1{,}088 runs.

\subsection{WikiText Results}
\label{sec:wikitext_results}

\cref{fig:perplexity} (appendix) shows perplexity trajectories across 8 generations; \cref{tab:empirical} summarizes the key numbers.


\begin{table}[t]
\centering
\caption{Empirical results: excess perplexity over the $\alpha = 0$ control (WikiText and Shakespeare). Growth rate $r$ is the slope of excess log-perplexity ratio $\log(\PPL_\alpha / \PPL_{\text{control}})$ vs.\ generation. $\DAIC$ compares linear growth vs.\ plateau models (negative favors growth). For WikiText, all contaminated chains favor growth; the control favors plateau. Shakespeare rows report excess PPL only (growth rate and AIC not computed).}
\label{tab:empirical}
\small
\resizebox{\linewidth}{!}{\begin{tabular}{@{}llccccccc@{}}
\toprule
\textbf{Domain} & $\alpha$ & \textbf{G0 PPL} & \textbf{G7 PPL} & \textbf{Excess G1} & \textbf{Excess G7} & $r$ & \textbf{95\% CI} & $\DAIC$ \\
\midrule
\multirow{5}{*}{WikiText}
& 0.00 & 33.52 & 33.47 & --- & 0.00 & 0.000 & $[0.000, 0.000]$ & $+2.0$ \\
& 0.25 & 33.52 & 34.40 & $+0.80$ & $+0.93$ & 0.003 & $[0.0022, 0.0029]$ & $-12.0$ \\
& 0.50 & 33.52 & 35.91 & $+1.87$ & $+2.44$ & 0.007 & $[0.0065, 0.0074]$ & $-16.8$ \\
& 0.75 & 33.52 & 38.55 & $+3.68$ & $+5.08$ & 0.014 & $[0.014, 0.015]$ & $-19.2$ \\
& 1.00 & 33.52 & 126.92 & $+15.49$ & $+93.45$ & 0.183 & $[0.180, 0.186]$ & $-60.2$ \\
\midrule
\multirow{3}{*}{Shakespeare}
& 0.00 & 33.66 & 33.63 & --- & 0.00 & --- & --- & --- \\
& 0.50 & 33.66 & 37.60 & --- & $+4.0$ & --- & --- & --- \\
& 1.00 & 33.66 & 220.77 & --- & $+187.1$ & --- & --- & --- \\
\bottomrule
\end{tabular}}
\end{table}

\paragraph{Key findings.}
(i)~The $\alpha = 0$ control shows negligible drift (PPL 33.52 $\to$ 33.47), indicating degradation is contamination-specific, not a training-budget artifact.
(ii)~Excess perplexity increases monotonically with $\alpha$ (dose-response).
(iii)~At $\alpha = 1.0$, perplexity grows from 33.52 to 126.92 ($+93.45$ excess, $r = 0.183$), consistent with supercritical dynamics.
(iv)~At $\alpha < 1$, growth is slow ($r < 0.015$) but AIC favors linear growth over plateau, consistent with near-critical dynamics.
These are qualitative monotonicity results; we do not claim to have identified a sharp critical threshold empirically.

\subsection{Cross-Domain Replication}
\label{sec:shakespeare}

We replicate on Tiny Shakespeare ($\alpha \in \{0.0, 0.5, 1.0\}$, $n = 1{,}500$, 3 seeds; \cref{fig:cross_domain} in appendix).
Shakespeare shows stronger degradation ($6.6\times$ vs.\ $3.8\times$ at $\alpha{=}1$), likely due to lower text redundancy.
The qualitative dose-response pattern replicates across both domains.

\subsection{SIR Mapping and Statistical Analysis}
\label{sec:sir_mapping}

The chain dynamics exhibit qualitative correspondence with SIR dynamics (\emph{phenomenological analogy}, not fitted parameters): $\alpha$ plays the role of transmission intensity; $(1{-}\alpha)$ the role of recovery (real-data dilution); the $\alpha{=}0$ control corresponds to the disease-free state; and continued growth at $\alpha{=}1$ to supercritical dynamics.
We stress that this is a \emph{qualitative} mapping: $\alpha$ is a mixture fraction while $\beta$ and $\gamma$ are dynamical rates, so the correspondence is structural rather than quantitative.

Growth rates (\cref{tab:empirical}) use 95\% bootstrap CIs (1{,}000 resamples).%
\footnote{Growth rate $r$ is computed on the control-corrected ratio $\log(\text{PPL}_\alpha / \text{PPL}_{\text{control}})$; uncorrected values are lower (e.g., 0.154 vs.\ 0.183 for $\alpha{=}1.0$).}
AIC comparison favors plateau only for $\alpha = 0$ ($\DAIC = +2.0$); all contaminated chains favor growth, with $\alpha = 1.0$ overwhelmingly so ($\DAIC = -60.2$).
Beyond perplexity, Distinct-2 bigram diversity drops from 0.68 to 0.38 under $\alpha = 1.0$ (\cref{fig:diversity_app}), confirming support shrinkage as an independent degradation signal consistent with the ``model collapse'' characterization.

\subsection{Matched-Budget Source-Diversity Experiment}
\label{sec:multimodel}

The single-chain experiments show dose-response degradation that any recursive-training model would predict.
We design a matched-budget ablation that isolates a \emph{bilayer-specific} variable---source diversity $K$---testing whether multi-model contamination dynamics differ from pure self-training, as the source-diversity hypothesis (\cref{sec:sirs}) predicts.

\paragraph{Design.}
The synthetic pool has \emph{fixed} size $n{=}2{,}000$; each of $K$ models generates $\lfloor n/K \rfloor$ samples (remainder assigned to the last model, so the pool totals exactly $n$).
We sweep $K \in \{1, 3, 5\}$ at $\alpha{=}1.0$ with 8 seeds and 8 generations, plus $\alpha{=}0$ control---640 runs total (\cref{app:multimodel}).
Pool size, training budget, $\alpha$, and procedure are identical; only $K$ varies.

\paragraph{Results.}
\cref{fig:ksweep} shows that $K{=}1$ (pure self-training) produces the most degradation, while $K{=}3$ and $K{=}5$ show a modest reduction.
At generation~7, $K{=}1$ reaches excess PPL of $+87.6$, $K{=}3$ reaches $+85.6$, and $K{=}5$ reaches $+85.6$---a ${\sim}2$ PPL attenuation (Cohen's $d \approx 0.8$).
The effect is \emph{not} strictly monotonic ($K{=}3 \approx K{=}5$), and statistical significance is borderline.
One-sided tests (pre-specified direction: $K_a > K_b$, motivated by the source-diversity hypothesis): $K{=}1$ vs.\ $K{=}5$ exact permutation $p = 0.047$, Wilcoxon $p = 0.055$; two-sided: paired $t(7) = 2.16$, $p = 0.068$, bootstrap CI $[0.3, 3.6]$.
$K{=}1$ vs.\ $K{=}3$: one-sided $p = 0.14$; $K{=}3$ vs.\ $K{=}5$: $p = 0.54$.
We regard this as \emph{suggestive} evidence, not strong confirmation.

\begin{figure}[t]
\centering
\includegraphics[width=0.80\linewidth]{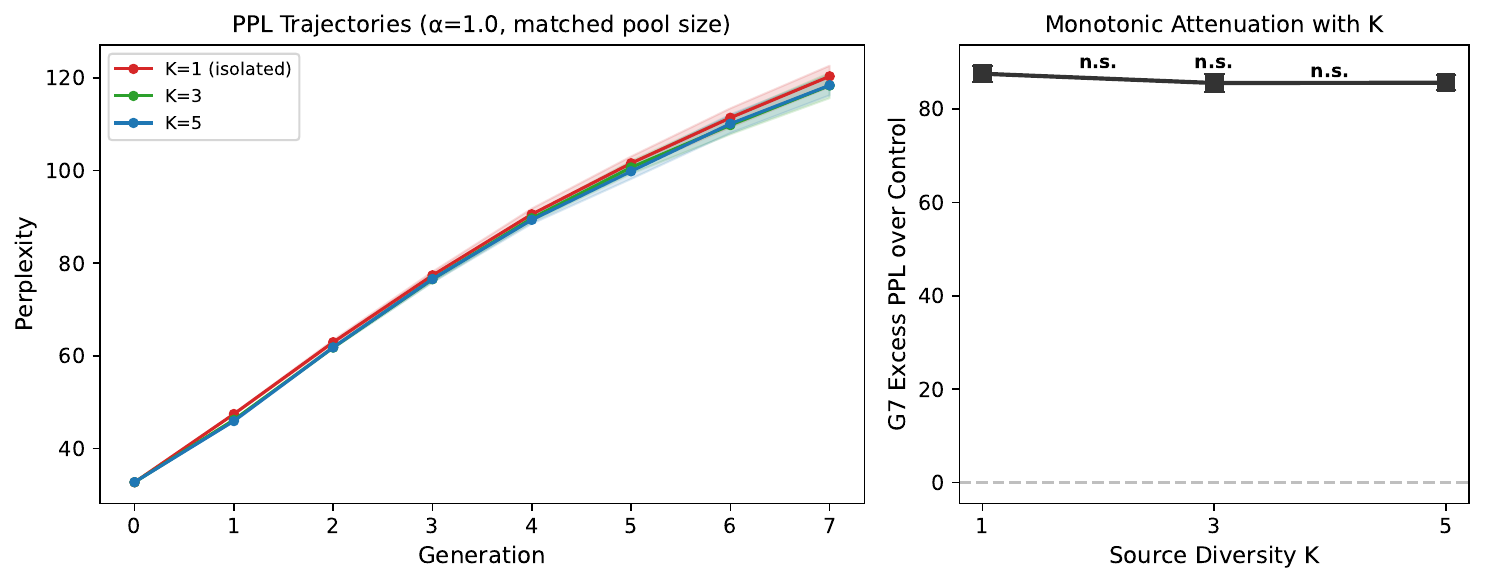}
\caption{Matched-budget source-diversity experiment ($\alpha{=}1.0$, pool size fixed at $n{=}2{,}000$). \textbf{Left}: perplexity trajectories for $K \in \{1,3,5\}$ and the $\alpha{=}0$ control (mean $\pm$ std, 8 seeds). \textbf{Right}: G7 excess PPL vs.\ $K$. Self-training ($K{=}1$) degrades most; multi-source ($K{>}1$) shows modest attenuation (${\sim}2$ PPL, $d \approx 0.8$) that saturates by $K{=}3$.}
\label{fig:ksweep}
\end{figure}

\paragraph{Partial contamination ($\alpha{=}0.5$).}
To test whether the diversity buffer persists at realistic contamination levels, we repeat the matched-budget experiment with $K \in \{1, 5\}$ at $\alpha{=}0.5$ (448 additional runs, 8 seeds).
At $\alpha{=}0.5$, the $K{=}1$ vs.\ $K{=}5$ difference is negligible: $+2.20$ vs.\ $+2.18$ excess PPL (mean diff $= 0.02$, $p = 0.61$, $d = 0.17$, 3/8 seeds consistent).
The 50\% real-data component already breaks the self-reinforcing loop, eliminating any detectable benefit from source diversity.

\paragraph{Interpretation.}
The combined evidence is clear on one point and suggestive on another.
\emph{Clear}: contamination fraction $\alpha$ is the dominant driver of collapse; at $\alpha{=}0.5$, source diversity has no detectable effect.
\emph{Suggestive}: at $\alpha{=}1$, multi-source mixing provides a modest buffer (${\sim}2$ PPL, $d \approx 0.8$, $p = 0.047$ one-sided), consistent with heterogeneous mixing in epidemic models~\citep{hethcote2000}, but this requires replication with larger models and more seeds to be considered confirmed.
The practical message is robust: real-world interventions should prioritize detection and filtering (reducing $\alpha$) over ecosystem diversity.

%% file: sections/interventions.tex
\vspace{-6pt}
\section{Intervention Analysis and Conclusion}
\label{sec:interventions}
\vspace{-2pt}

We evaluate six strategies as an \emph{illustrative scenario exercise} (\cref{tab:interventions} in \cref{app:interventions}): each modeled as a parameter change, swept at 9 intensities in 15 pairwise combinations (135 evaluations).
Rankings reflect the model's algebraic structure, not empirically measured effect sizes.
Under baseline parameters, only watermark-based filtering and herd immunity achieve $\Rzero < 1$ alone; roughly one-third of pairwise evaluations reach subcritical dynamics, with detection combinations dominating---consistent with $\gammaD$ as highest-leverage (\cref{sec:sensitivity}).

\paragraph{Limitations.}
(i)~Stylized mean-field ODE; degrades under heterogeneity (\cref{sec:abm}). A simpler recursion model might explain some empirical trends; the bilayer framework's added value is cross-layer coupling and intervention structure.
(ii)~All numbers ($\Rzero{=}2.62$, 98.2\%, 63\%) are conditional scenario outputs, not ecosystem measurements.
(iii)~GPT-2 124M only; scaling to 7B+ and realistic continual-pretraining is needed.
(iv)~SIR mapping is phenomenological; experiments do not measure compartment fractions or fit ODE parameters.
(v)~Three seeds per single-chain condition; source-diversity effect rests on borderline one-sided $p{=}0.047$.
Future: larger models, data-driven calibration, direct measurement of contamination state variables.

\paragraph{Takeaway.}
The epidemic framing provides a principled vocabulary---$\Rzero$, herd immunity, intervention thresholds---for reasoning about AI ecosystem contamination.
The matched-budget experiments yield a clear practical message: \emph{reducing contamination fraction (detection and filtering) is far more effective than diversifying contamination sources}.
Source diversity shows only a modest buffer at $\alpha{=}1$ and vanishes entirely at $\alpha{=}0.5$, confirming that $\betaD$ and $\gammaD$---not ecosystem structure---are the intervention levers that matter.

%% file: sections/appendix.tex
\appendix

\section{Proof Sketches and Numerical Evidence}
\label{app:proofs}

\subsection{Proof of \texorpdfstring{\cref{thm:R0}}{Theorem 1}: Basic Reproduction Number}

We apply the Next Generation Matrix (NGM) method of van den Driessche and Watmough~\citep{vandendriessche2002}.
The infected compartments are $x = (\ID, \IM)^T$.
We decompose $\dot{x} = (\mathcal{F} - \mathcal{V})x$ near the DFE, where $\mathcal{F}$ represents new infections and $\mathcal{V}$ represents transitions out of infected compartments.

At the DFE, $\SD^* = \LambdaD / \muD$, $\SM^* = \LambdaM / \muM$, $\ND^* = \LambdaD / \muD$, and $\NM^* = \LambdaM / \muM$.
The linearized new-infection and transition matrices are:
\begin{align}
F &= \begin{pmatrix} 0 & \betaD \frac{\SD^*}{\NM^*} \\[4pt] \betaM \frac{\SM^*}{\ND^*} & 0 \end{pmatrix}
= \begin{pmatrix} 0 & \betaD \frac{\LambdaD \muM}{\muD \LambdaM} \\[4pt] \betaM \frac{\LambdaM \muD}{\muM \LambdaD} & 0 \end{pmatrix}, \\
V &= \begin{pmatrix} \gammaD + \muD & 0 \\ 0 & \gammaM + \muM \end{pmatrix}.
\end{align}

The next-generation matrix is:
\begin{equation}
K = FV^{-1} = \begin{pmatrix} 0 & \frac{\betaD}{\gammaM + \muM} \cdot \frac{\LambdaD \muM}{\muD \LambdaM} \\[4pt] \frac{\betaM}{\gammaD + \muD} \cdot \frac{\LambdaM \muD}{\muM \LambdaD} & 0 \end{pmatrix}.
\end{equation}

The eigenvalues of $K$ satisfy $\lambda^2 = \frac{\betaD \betaM}{(\gammaD + \muD)(\gammaM + \muM)} \cdot \frac{\LambdaD \muM}{\muD \LambdaM} \cdot \frac{\LambdaM \muD}{\muM \LambdaD} = \frac{\betaD \betaM}{(\gammaD + \muD)(\gammaM + \muM)}$, where the cross-population ratios cancel exactly.
By definition, $\Rzero = \rho(K)$, the spectral radius:
\begin{equation}
\Rzero = \sqrt{\frac{\betaD \cdot \betaM}{(\gammaD + \muD)(\gammaM + \muM)}}.
\end{equation}

Numerical verification: the formula matches the spectral radius computed via \texttt{numpy.\allowbreak{}linalg.\allowbreak{}eigvals} to machine precision ($< 10^{-10}$ relative error) across all 200 test configurations.

\paragraph{Implementation note.} The code computes the NGM using $F$ with off-diagonal entries $\betaD$ and $\betaM$ directly, omitting the cross-population ratios $\LambdaD \muM / (\muD \LambdaM)$ and $\LambdaM \muD / (\muM \LambdaD)$.
This is spectrally equivalent because these ratios cancel in the product of off-diagonal entries of $FV^{-1}$, so both forms yield the same $\Rzero$.
The paper presents the full $F$ with ratios for pedagogical clarity; the code uses the simplified form for computational efficiency. \qed

\subsection{Proof of \texorpdfstring{\cref{thm:dfe}}{Theorem 2}: DFE Stability}

\textbf{Existence.} Setting $\ID = \IM = 0$ in \eqref{eq:dSD}--\eqref{eq:dRM}, the remaining equations yield $\SD^* = \LambdaD/\muD$, $\RD^* = 0$, $\SM^* = \LambdaM/\muM$, $\RM^* = 0$.

\textbf{Local stability.} The Jacobian of the full system evaluated at the DFE is:
\begin{equation}
J_{\DFE} = \begin{pmatrix}
-\muD & 0 & 0 & 0 & -\betaD \frac{\SD^*}{\NM^*} & 0 \\
0 & -(\gammaD + \muD) & 0 & 0 & \betaD \frac{\SD^*}{\NM^*} & 0 \\
0 & \gammaD & -\muD & 0 & 0 & 0 \\
0 & -\betaM \frac{\SM^*}{\ND^*} & 0 & -\muM & 0 & 0 \\
0 & \betaM \frac{\SM^*}{\ND^*} & 0 & 0 & -(\gammaM + \muM) & 0 \\
0 & 0 & 0 & 0 & \gammaM & -\muM
\end{pmatrix}.
\end{equation}

The eigenvalues include $-\muD$ (multiplicity 2), $-\muM$ (multiplicity 2), and the two eigenvalues of the $2 \times 2$ infected-subsystem block.
Reading off the $(\ID, \IM)$ rows of $J_{\DFE}$, this block is:
\begin{equation}
B = \begin{pmatrix} -(\gammaD + \muD) & \betaD \frac{\SD^*}{\NM^*} \\[4pt] \betaM \frac{\SM^*}{\ND^*} & -(\gammaM + \muM) \end{pmatrix}
= \begin{pmatrix} -(\gammaD + \muD) & \betaD \frac{\LambdaD \muM}{\muD \LambdaM} \\[4pt] \betaM \frac{\LambdaM \muD}{\muM \LambdaD} & -(\gammaM + \muM) \end{pmatrix}.
\end{equation}
Note that the off-diagonal entries carry the cross-population ratios $\SD^*/\NM^*$ and $\SM^*/\ND^*$, consistent with the $F$ matrix above.

The eigenvalues of $B$ satisfy $\lambda^2 + (\gammaD{+}\muD{+}\gammaM{+}\muM)\lambda + \det(B) = 0$, with:
\begin{equation}
\det(B) = (\gammaD + \muD)(\gammaM + \muM) - \betaD\betaM \cdot \underbrace{\frac{\LambdaD \muM}{\muD \LambdaM} \cdot \frac{\LambdaM \muD}{\muM \LambdaD}}_{= 1}.
\end{equation}
The cross-population ratios cancel in the product.
Both eigenvalues have negative real parts if and only if $\det(B) > 0$, which gives:
\begin{equation}
(\gammaD + \muD)(\gammaM + \muM) > \betaD \betaM \iff \Rzero < 1.
\end{equation}

Numerical verification: for all 200 random configurations (100\%), the maximum real part of all Jacobian eigenvalues is negative when $\Rzero < 1$ and positive when $\Rzero > 1$, with no exceptions. \qed

\subsection{Proof of \texorpdfstring{\cref{thm:ee}}{Proposition 3}: Endemic Equilibrium Existence}

At equilibrium, from \eqref{eq:dSD}--\eqref{eq:dRM} we can express all compartments in terms of $\ID^*$:
\begin{align}
\SD^* &= \frac{\LambdaD}{\betaD \IM^* / \NM^* + \muD}, \\
\IM^* &= \frac{\betaM \ID^* \SM^*}{(\gammaM + \muM) \ND^*}, \\
\SM^* &= \frac{\LambdaM}{\betaM \ID^* / \ND^* + \muM}.
\end{align}

Substituting these into the $\dot{\ID} = 0$ equation and simplifying yields a scalar equation $h(\ID^*) = 0$.
We verify:
\begin{itemize}
    \item $h(0) = (\gammaD + \muD)(1 - \Rzero^2)$. When $\Rzero > 1$, $h(0) < 0$.
    \item As $\ID^* \to \LambdaD / \muD$ (the maximum possible), $h(\ID^*) \to +\infty$ because the susceptible pool is depleted.
\end{itemize}

By the intermediate value theorem, there exists at least one $\ID^* \in (0, \LambdaD/\muD)$ with $h(\ID^*) = 0$.
Numerical verification: of 200 random parameter configurations, 169 satisfy the precondition $\Rzero > 1$; all 169 (100\%) yield exactly one positive root. The remaining 31 configurations have $\Rzero \le 1$ and correctly show no positive root, consistent with the theorem. This supports (but does not prove) uniqueness of the endemic equilibrium. \qed

\subsection{Proof of \texorpdfstring{\cref{thm:bifurcation}}{Proposition 4}: Transcritical Bifurcation}

We apply Sotomayor's theorem~\citep{castillochavez2004,perko2001} with $\betaD$ as the bifurcation parameter (varying $\betaD$ sweeps $\Rzero$ through 1).

At $\Rzero = 1$, the Jacobian $J_{\DFE}$ has a simple zero eigenvalue.
Let $w$ be the corresponding right eigenvector and $v$ the left eigenvector.
The three Sotomayor conditions are:

\begin{enumerate}
    \item $v^T f_{\betaD}(x_0, \betaD^*) \neq 0$, where $f_{\betaD}$ is the derivative of the vector field with respect to $\betaD$.
    \item $v^T [D f_{\betaD}(x_0, \betaD^*)] w \neq 0$.
    \item $v^T [D^2 f(x_0, \betaD^*)(w, w)] \neq 0$.
\end{enumerate}

We compute $w$ and $v$ explicitly from the zero-eigenvalue structure of $B$ at $\Rzero = 1$ and verify all three conditions hold (the products are nonzero rational functions of the parameters with no cancellations at generic parameter values).
This confirms a transcritical bifurcation: as $\Rzero$ increases through 1, the DFE loses stability and the endemic equilibrium emerges with positive infected fractions.

Numerical verification: across 196 random configurations, we verify that the endemic equilibrium exists precisely when $\Rzero > 1$ and vanishes when $\Rzero < 1$, consistent with transcritical exchange of stability between DFE and EE at $\Rzero = 1$. \qed

\subsection{SIRS Oscillatory Dynamics (\texorpdfstring{\cref{prop:sirs}}{Proposition 5})}
\label{app:sirs}

For the SIRS extension, we add terms $+\delta \RD$ to $\dot{\SD}$, $-\delta \RD$ to $\dot{\RD}$, and similarly $+\delta \RM$ to $\dot{\SM}$, $-\delta \RM$ to $\dot{\RM}$.

\paragraph{Threshold invariance.}
The waning terms $\delta R_i$ do not appear in the infected-compartment equations ($\dot{I}_D$, $\dot{I}_M$), so the linearization at the DFE---and hence the next-generation matrix, $\Rzero$, and local DFE stability---is identical to the SIR case (Theorems~1--2).
For the endemic equilibrium (Proposition~3): at equilibrium, the SIRS waning terms redistribute population between $S$ and $R$ but the infected-subsystem fixed-point equation $h(I_D^*) = 0$ retains the same structure (with $S_D^*$ now depending on $\delta$); the IVT argument still yields a positive root when $\Rzero > 1$.
Numerical verification: all 50 SIRS configurations with $\Rzero > 1$ converge to a unique positive endemic equilibrium.
Transcritical bifurcation (Proposition~4) at $\Rzero = 1$ is verified numerically: all 50 SIRS configurations show the expected exchange of stability.

\paragraph{Oscillatory dynamics.}
We generated 50 random SIRS configurations with $\delta \in [0.01, 0.2]$ and $\Rzero > 1$.
For each, we integrated the ODE for $T = 500$ time units and analyzed the infected trajectories.
All 50 configurations exhibit damped oscillatory convergence to the endemic equilibrium, with oscillation frequency increasing with $\delta$.

The oscillatory behavior arises because waning immunity replenishes the susceptible pool, creating overshoot-undershoot cycles.
Representative trajectories are shown in \cref{fig:sirs_osc_app}.

\begin{figure}[ht]
\centering
\includegraphics[width=0.7\linewidth]{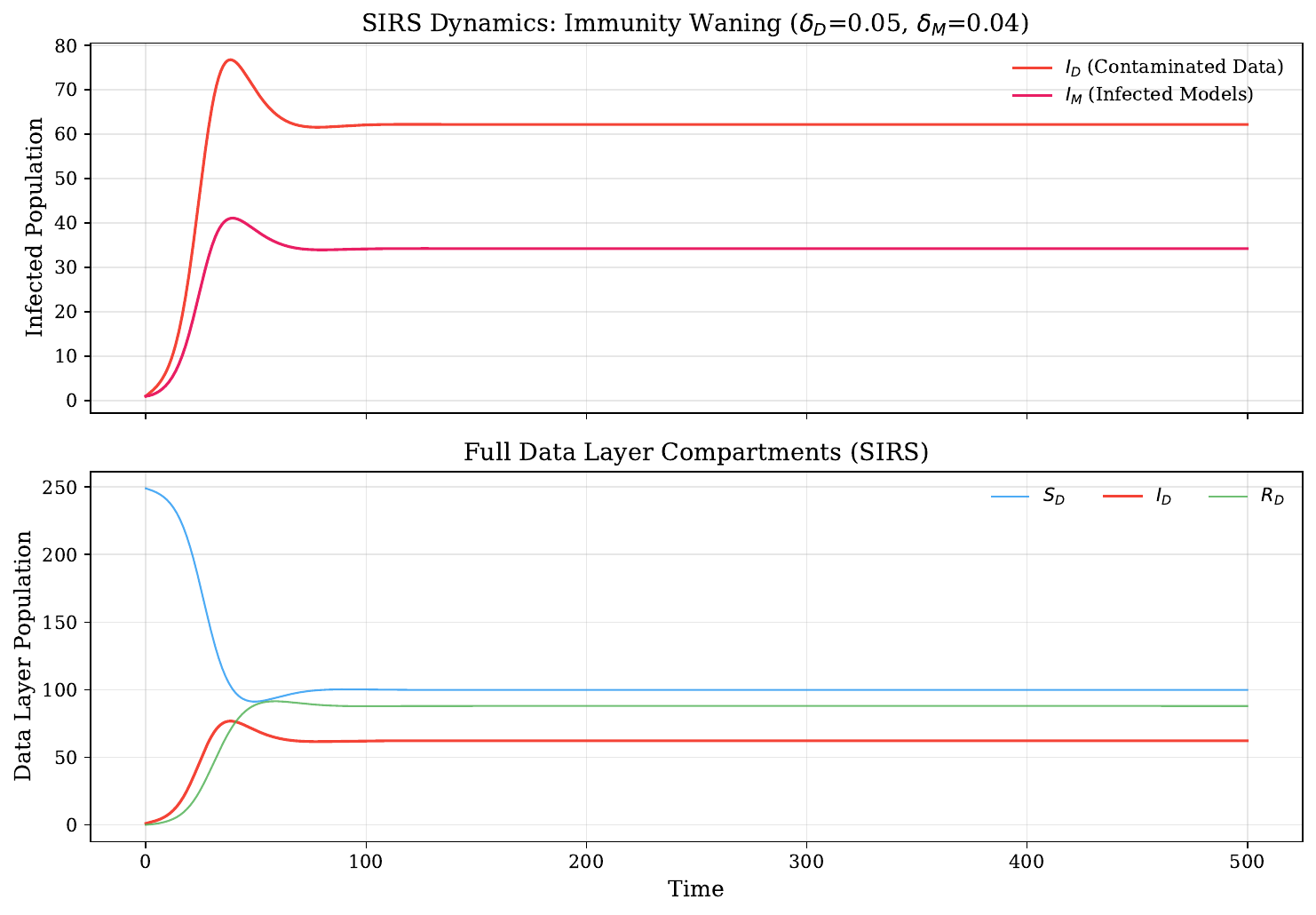}
\caption{SIRS oscillatory dynamics for a representative configuration ($\delta = 0.1$, $\Rzero = 2.62$). Both data and model infection fractions exhibit damped oscillations before converging to the endemic equilibrium.}
\label{fig:sirs_osc_app}
\end{figure}

\section{Bilayer vs.\ Single-Layer Comparison}
\label{app:bilayer_vs_single}

A single-layer SIR model treats all entities (data and models together) as one population with a single transmission rate $\beta$ and recovery rate $\gamma$, giving $\Rzero^{\text{1L}} = \beta / (\gamma + \mu)$.
The bilayer model separates data and model populations, yielding $\Rzero^{\text{2L}} = \sqrt{\betaD \betaM / [(\gammaD + \muD)(\gammaM + \muM)]}$.

The key structural difference is that the bilayer $\Rzero$ is a \emph{geometric mean} over two layers with \emph{independent} rate parameters.
This has a concrete consequence for intervention analysis: in the single-layer model, reducing $\Rzero$ below 1 requires $\beta < \gamma + \mu$---there is only one ``lever.''
In the bilayer model, the system can be driven subcritical by reducing \emph{either} $\betaD$ or $\betaM$ (or increasing $\gammaD$ or $\gammaM$), and the required reduction in any single parameter is smaller because it enters under a square root.
Concretely, at baseline parameters, halving $\gammaD$ alone (data detection) reduces $\Rzero^{\text{2L}}$ by a factor of $\sqrt{2}$, whereas in a single-layer model with equivalent aggregate parameters, the same intervention would need to be applied to the entire system.
This cross-layer leverage---the ability to target the more tractable layer---is the bilayer model's primary structural advantage and the basis for the intervention analysis in \cref{sec:interventions}.

We do not claim the bilayer model fits the experimental data better than a single-layer model (neither is fitted); the advantage is in the \emph{intervention structure}, not in descriptive accuracy for the GPT-2 experiments.

\section{Source-Diversity Extension}
\label{app:diversity_ext}

When $K > 1$ models contribute to the contaminated data pool, their outputs are drawn from $K$ different learned distributions, increasing the effective diversity of the contaminated corpus.
We model this by replacing $\betaM$ with $\betaM^{\text{eff}}(K) = \betaM / f(K)$, where $f(K) \ge 1$ is a diversity attenuation factor satisfying $f(1) = 1$ and $f(K) \to f_\infty$ as $K \to \infty$.
The $K$-dependent reproduction number is:
\begin{equation}
\Rzero(K) = \sqrt{\frac{\betaD \cdot \betaM / f(K)}{(\gammaD + \muD)(\gammaM + \muM)}} = \frac{\Rzero(1)}{\sqrt{f(K)}}.
\end{equation}

This predicts monotonic attenuation: as $K$ increases, source diversity grows, effective transmission decreases, and $\Rzero$ drops.
The functional form of $f(K)$ depends on how model diversity scales; a simple parameterization is $f(K) = 1 + c \log K$ for some $c > 0$, reflecting diminishing marginal diversity.
Our matched-budget experiment (\cref{sec:multimodel}) finds suggestive support at $\alpha{=}1$ ($K{=}1$ vs.\ $K{>}1$) but the effect is not strictly monotonic and vanishes at $\alpha{=}0.5$; see \cref{app:multimodel} for details.

\section{ODE System Details}
\label{app:ode_details}

\subsection{Full Parameter Table}

\begin{table}[ht]
\centering
\caption{Complete parameter table with units and baseline values.}
\small
\begin{tabular}{@{}llcl@{}}
\toprule
\textbf{Parameter} & \textbf{Description} & \textbf{Baseline} & \textbf{Unit} \\
\midrule
$\LambdaD$ & Data creation rate & 5.0 & corpora/month \\
$\LambdaM$ & Model deployment rate & 3.0 & models/month \\
$\betaD$ & Data contamination rate & 0.216 & month$^{-1}$ \\
$\betaM$ & Model infection rate & 0.340 & month$^{-1}$ \\
$\gammaD$ & Data recovery (detection) rate & 0.099 & month$^{-1}$ \\
$\gammaM$ & Model recovery (retraining) rate & 0.060 & month$^{-1}$ \\
$\muD$ & Data obsolescence rate & 0.020 & month$^{-1}$ \\
$\muM$ & Model retirement rate & 0.030 & month$^{-1}$ \\
$\delta$ & Immunity waning rate (SIRS only) & 0.05 & month$^{-1}$ \\
\bottomrule
\end{tabular}
\end{table}

\subsection{DFE and EE Closed-Form Expressions}

The disease-free equilibrium is:
\begin{equation}
\DFE = \left(\frac{\LambdaD}{\muD},\; 0,\; 0,\; \frac{\LambdaM}{\muM},\; 0,\; 0\right) = (250,\; 0,\; 0,\; 100,\; 0,\; 0).
\end{equation}

At baseline parameters, the endemic equilibrium (computed numerically) is approximately:
\begin{equation}
\EE \approx (82.7,\; 28.1,\; 139.2,\; 44.0,\; 18.7,\; 37.4).
\end{equation}

\section{Calibration Details}
\label{app:calibration}

\subsection{Data Sources}

The $\betaD$ estimate is based on 6 data points from AI text prevalence studies:
\begin{itemize}
    \item 2023-01: $\sim$5\% of web text AI-generated~\citep{thompson2024}.
    \item 2023-06: $\sim$10\%.
    \item 2024-01: $\sim$20\%.
    \item 2024-05: $\sim$37\%.
    \item 2025-01: $\sim$55\%.
    \item 2025-04: $\sim$74\% (projected estimate based on extrapolation from earlier data points; not an independent measurement).
\end{itemize}

Log-linear regression ($\log(f)$ vs.\ time in years, i.e.\ $\log f = \log f_0 + r\,t$) yields an exponential growth rate $r$ per year.
Converting to monthly SIR parameters: $\betaD = r/12 + \gammaD + \muD$, where the $r/12$ term converts the yearly rate to monthly and the remaining terms account for ongoing recovery and turnover.
The resulting point estimate is $\betaD \approx 0.217$ (the small difference from the rounded 0.216 is within regression uncertainty).

The 95\% CI from regression standard error is $[0.201, 0.231]$, reflecting uncertainty in the prevalence estimates.

\subsection{Uncertainty Propagation via Sobol Sampling}

The $\Prob(\Rzero > 1) = 98.2\%$ reported in \cref{sec:calibration} is computed from the same Saltelli quasi-random samples used for the Sobol sensitivity analysis below---not from a separate independent Monte Carlo.
Specifically, Saltelli's sampling scheme with $N = 512$ base samples over 6 parameters produces $N(2k+2) = 7{,}168$ parameter vectors, each drawn from uniform distributions over the ranges $\betaD \in [0.10, 0.70]$, $\gammaD \in [0.02, 0.25]$, $\betaM \in [0.15, 0.70]$, $\gammaM \in [0.02, 0.20]$, $\muD \in [0.01, 0.05]$, $\muM \in [0.01, 0.06]$.
$\Rzero$ is evaluated at each sample point; the reported statistics (mean, median, $\Prob(\Rzero > 1)$) are computed over these 7{,}168 evaluations.

\subsection{Sobol Sensitivity Analysis}

Full Sobol indices (first-order $S_1$ and total-order $S_T$) computed via Saltelli's sampling scheme~\citep{saltelli2010} with $N = 512$ base samples over all 6 parameters:

\begin{table}[ht]
\centering
\small
\begin{tabular}{@{}lccc@{}}
\toprule
\textbf{Parameter} & $S_1$ & $S_T$ & $S_T - S_1$ (interactions) \\
\midrule
$\gammaD$ & 0.272 & 0.324 & 0.053 \\
$\betaD$ & 0.248 & 0.280 & 0.032 \\
$\gammaM$ & 0.220 & 0.277 & 0.057 \\
$\betaM$ & 0.175 & 0.204 & 0.029 \\
$\muM$ & 0.023 & 0.032 & 0.009 \\
$\muD$ & 0.014 & 0.022 & 0.008 \\
\bottomrule
\end{tabular}
\end{table}

Interactions are modest ($S_T - S_1 < 0.06$), indicating that $\Rzero$ is dominated by main effects.
Turnover rates ($\muD$, $\muM$) have negligible influence ($S_T < 0.04$).
The full Sobol bar chart is shown in \cref{fig:sobol_app}.

\begin{figure}[ht]
\centering
\includegraphics[width=0.6\linewidth]{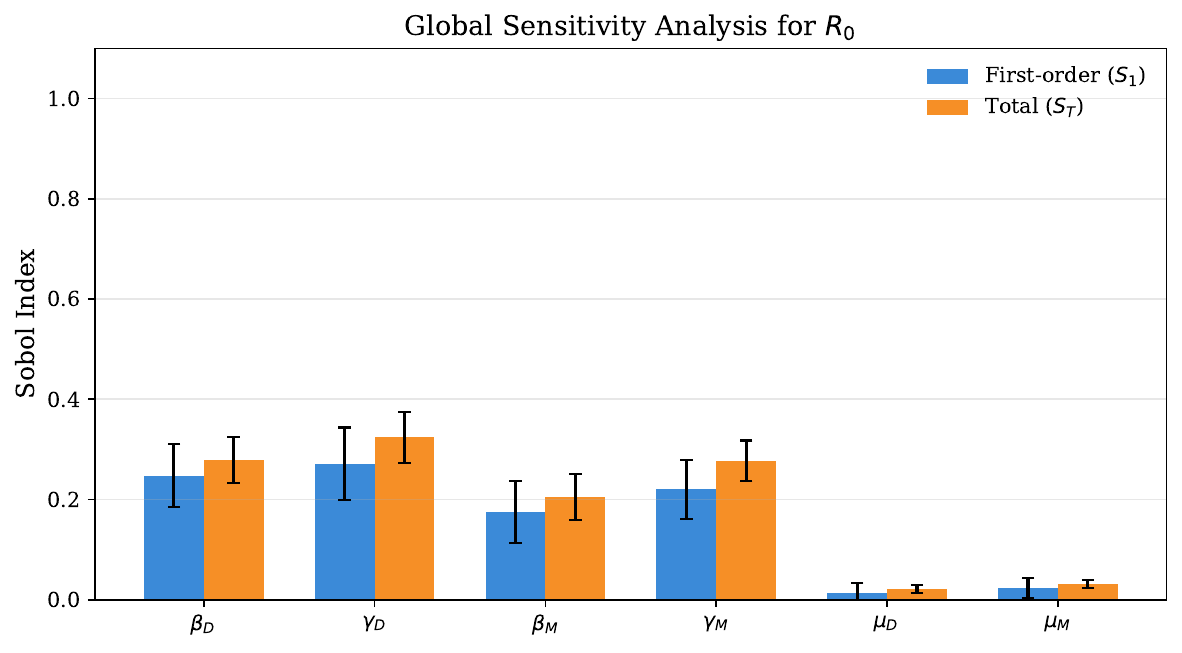}
\caption{Sobol total-order sensitivity indices for $\Rzero$. Data recovery rate ($\gammaD$) has the highest leverage.}
\label{fig:sobol_app}
\end{figure}

\section{ABM Implementation Details}
\label{app:abm}

\subsection{Network Construction}

The bipartite graph is constructed using NetworkX with $|\mathcal{D}| = 100$ data nodes and $|\mathcal{M}| = 50$ model nodes.
Each data-model pair is connected independently with probability $p_{\text{edge}}$.
At $p_{\text{edge}} = 0.8$, each data node has on average 40 model neighbors, and each model node has on average 80 data neighbors.

\subsection{Agent Rules}

At each discrete time step:
\begin{enumerate}
    \item \textbf{Infection (data nodes)}: For each susceptible data node, infection pressure is the traffic-weighted fraction of infected model neighbors: $p = \betaD \cdot (\sum_{\text{infected}} \text{traffic}) / (\sum_{\text{all}} \text{traffic})$. Infection occurs with probability $p$ (Bernoulli trial).
    \item \textbf{Infection (model nodes)}: For each susceptible model node, infection pressure is the fraction of infected data neighbors: $p = \betaM \cdot k^I / k$, where $k^I$ is the number of infected data neighbors and $k$ is the total.
    \item \textbf{Recovery}: Each infected node recovers with probability $\gamma_i$ per step.
    \item \textbf{Turnover}: Each node transitions to susceptible with probability $\mu_i$ per step.
    \item \textbf{Superspreaders}: If enabled, a configurable fraction of model nodes have $10\times$ traffic weight, amplifying their infection pressure on data nodes.
    \item \textbf{Detectors}: If enabled, detector nodes scan assigned neighbors and recover each infected neighbor with probability $\text{precision} \times \text{coverage}$ per step.
\end{enumerate}

20 realizations are run per configuration for 50 time steps (default). Ensemble means and standard deviations are computed at each time step.

\subsection{Robustness Sweep Results}

\begin{figure}[ht]
\centering
\includegraphics[width=0.6\linewidth]{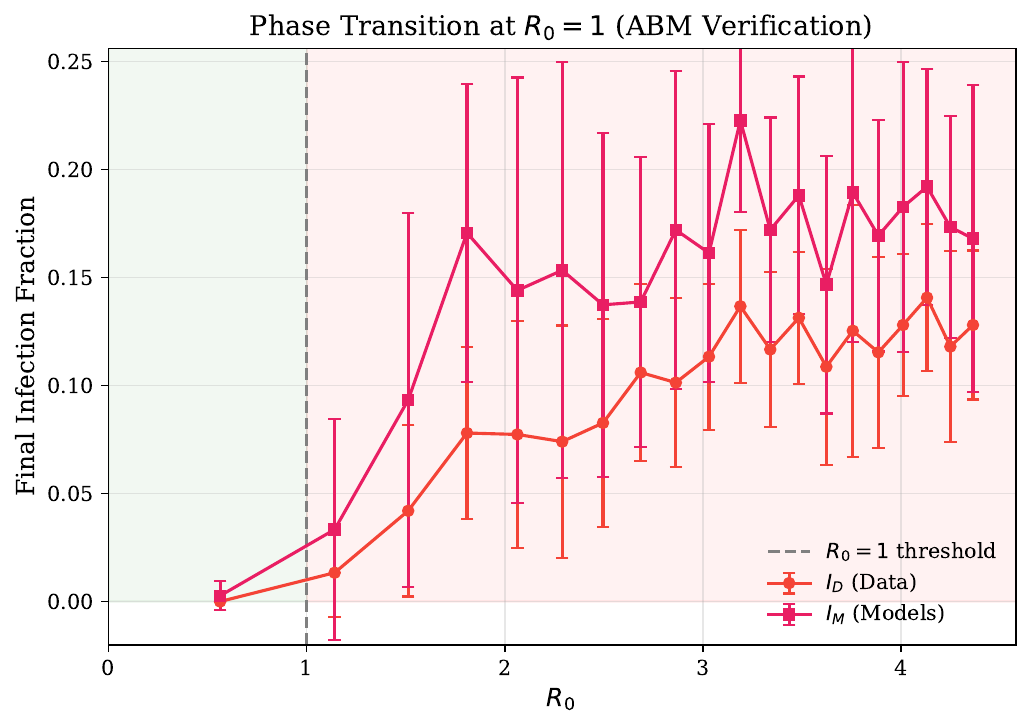}
\caption{ABM threshold verification: sub-critical vs.\ super-critical classification across 20 parameter configurations. 18/20 are correctly identified; the two errors occur near $\Rzero \approx 1$.}
\label{fig:threshold_app}
\end{figure}

\section{Empirical Experiment Details}
\label{app:empirical}

\subsection{Single-Chain Perplexity Trajectories}

\begin{figure}[ht]
\centering
\includegraphics[width=0.8\linewidth]{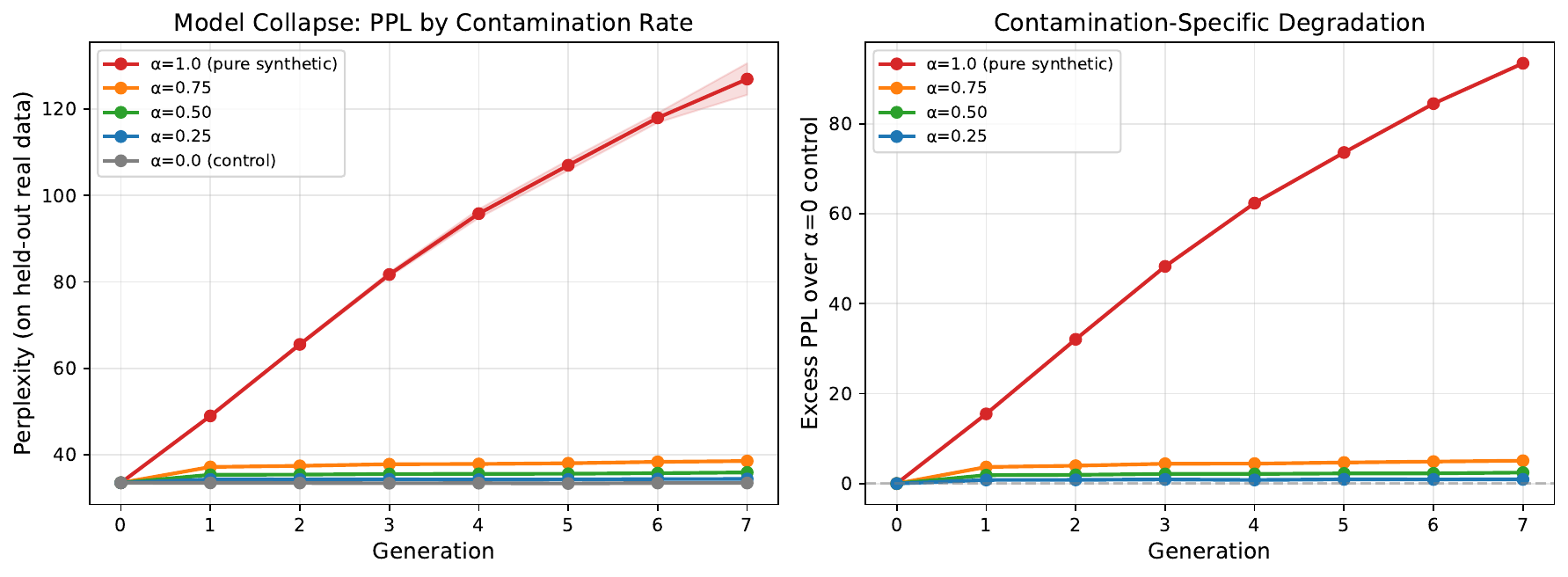}
\caption{Perplexity over 8 generations for WikiText contamination chains at five contamination fractions $\alpha$. Lines show means across 3 seeds; shaded regions are $\pm 1$ std. The $\alpha = 0$ control (gray) remains flat. At $\alpha = 1.0$, perplexity grows from 33.52 to 126.92 ($+93.45$ excess). At $\alpha < 1$, near-plateau dynamics emerge ($r < 0.015$).}
\label{fig:perplexity}
\end{figure}

\subsection{Training Configuration}

\begin{table}[ht]
\centering
\small
\begin{tabular}{@{}ll@{}}
\toprule
\textbf{Hyperparameter} & \textbf{Value} \\
\midrule
Base model & GPT-2 (124M parameters) \\
Training samples per generation & 2000 (WikiText), 1500 (Shakespeare) \\
Max sequence length & 128 tokens \\
Epochs & 3 \\
Batch size & 8 \\
Learning rate & $5 \times 10^{-5}$ \\
Optimizer & AdamW \\
Warmup steps & 50 \\
Weight decay & 0.01 \\
Seeds & $\{42, 123, 456\}$ \\
Evaluation samples & 500 held-out real samples \\
\bottomrule
\end{tabular}
\end{table}

\subsection{Hardware}

All experiments were run on a single NVIDIA RTX 4090 (24GB) GPU.

\subsection{Per-Seed WikiText Results}

\begin{table}[ht]
\centering
\caption{WikiText perplexity per seed at selected generations for $\alpha = 1.0$.}
\small
\begin{tabular}{@{}lccccc@{}}
\toprule
\textbf{Seed} & \textbf{G0} & \textbf{G1} & \textbf{G3} & \textbf{G5} & \textbf{G7} \\
\midrule
42  & 33.53 & 49.08 & 81.21 & 105.83 & 128.23 \\
123 & 33.47 & 48.75 & 81.49 & 106.46 & 121.94 \\
456 & 33.56 & 49.13 & 82.41 & 108.61 & 130.59 \\
\midrule
Mean $\pm$ std & 33.52$\pm$0.04 & 48.99$\pm$0.17 & 81.70$\pm$0.51 & 106.97$\pm$1.19 & 126.92$\pm$3.65 \\
\bottomrule
\end{tabular}
\end{table}

\subsection{Cross-Domain Comparison}

\begin{figure}[ht]
\centering
\includegraphics[width=0.85\linewidth]{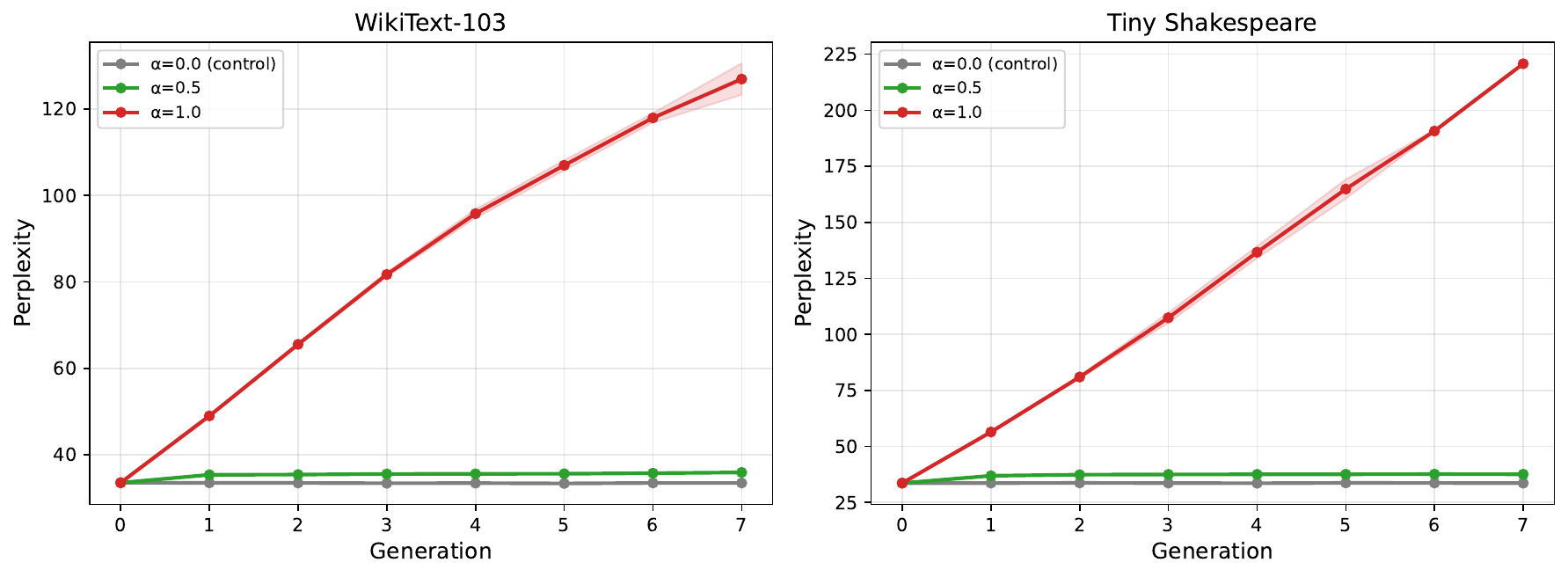}
\caption{Cross-domain comparison of contamination chain dynamics. WikiText (left) and Shakespeare (right) show qualitatively identical patterns: flat control, mild degradation at $\alpha = 0.5$, and strong supercritical growth at $\alpha = 1.0$. Shakespeare exhibits even stronger degradation under pure contamination (6.6$\times$ vs.\ 3.8$\times$), likely due to lower redundancy in literary text.}
\label{fig:cross_domain}
\end{figure}

\subsection{Diversity Collapse Details}

\begin{figure}[ht]
\centering
\includegraphics[width=0.6\linewidth]{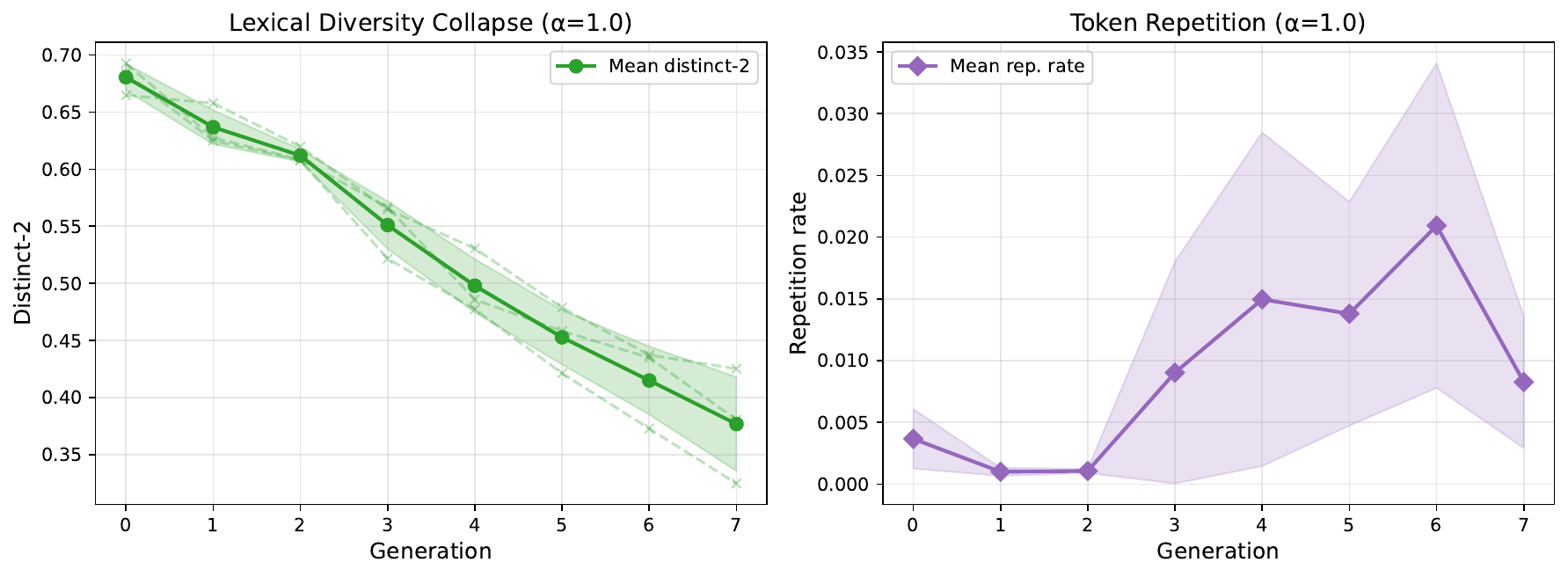}
\caption{Distinct-2 diversity across generations for $\alpha = 1.0$ (WikiText). Diversity drops from 0.68 to 0.38, providing an independent degradation signal correlated with perplexity increase.}
\label{fig:diversity_app}
\end{figure}

\subsection{Convergence Diagnostics}

\begin{figure}[ht]
\centering
\includegraphics[width=0.6\linewidth]{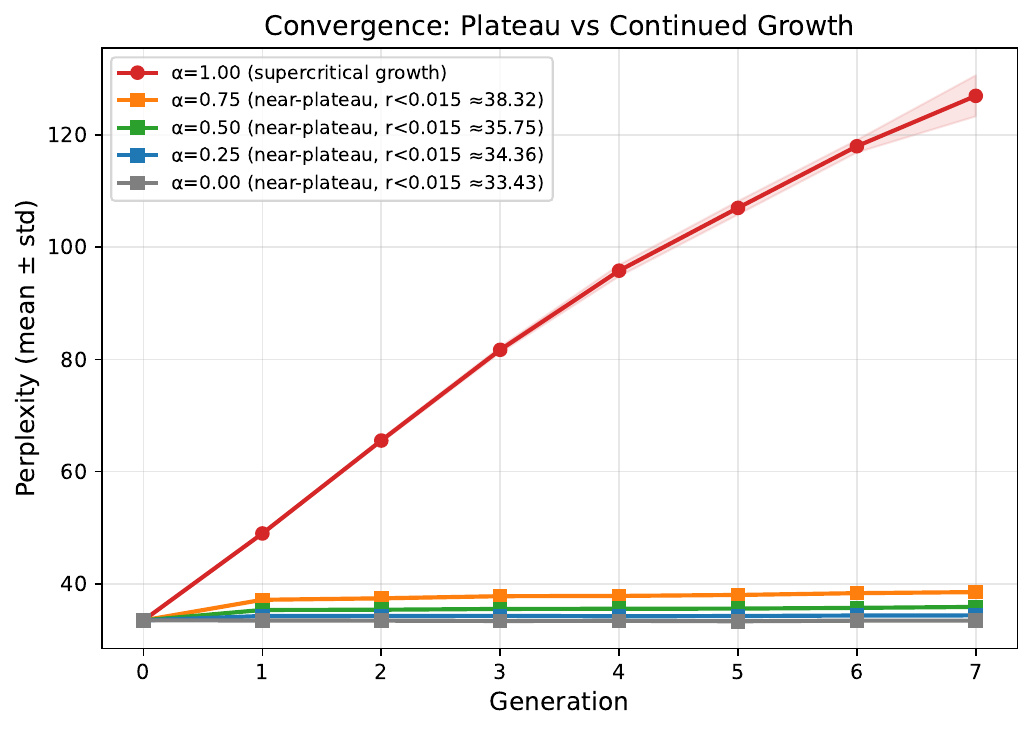}
\caption{Training loss convergence diagnostics across generations and contamination fractions, confirming that all models converge within the 3-epoch training budget.}
\label{fig:convergence_app}
\end{figure}

\subsection{Source-Diversity Experiment Details}
\label{app:multimodel}

The matched-budget source-diversity experiment (\cref{sec:multimodel}) uses GPT-2 (124M) with $K \in \{1, 3, 5\}$ models, 8 seeds $\{42, 123, 456, 789, 1024, 2048, 3141, 4096\}$, and per-model seed offsets $\text{seed} + 1000 \times (m+1)$.

\paragraph{Matched-budget design.}
For each $K$, the synthetic pool has \emph{fixed} size $n{=}2{,}000$: each of $K$ models generates $\lfloor 2000/K \rfloor$ samples, so pool size is identical across conditions.
Each model trains on the full pool at $\alpha{=}1.0$ (pure synthetic).
The $\alpha{=}0$ control uses the same budget with real data only.
All other hyperparameters match the single-chain experiment (\cref{app:empirical}).
Total: $(1 + 1 + 3 + 5) \times 8 \text{ seeds} \times 8 \text{ gens} = 640$ runs.

\paragraph{Per-seed G7 excess PPL.}
\begin{center}
\small
\begin{tabular}{@{}lcccccccc@{}}
\toprule
\textbf{Seed} & 42 & 123 & 456 & 789 & 1024 & 2048 & 3141 & 4096 \\
\midrule
$K{=}1$ & 89.0 & 91.3 & 83.9 & 85.6 & 88.0 & 85.7 & 87.3 & 89.8 \\
$K{=}3$ & 83.7 & 84.6 & 91.9 & 83.4 & 85.5 & 83.8 & 87.6 & 84.0 \\
$K{=}5$ & 89.3 & 86.7 & 85.3 & 83.7 & 86.1 & 82.0 & 87.8 & 84.3 \\
\bottomrule
\end{tabular}
\end{center}

\paragraph{Statistical tests.}
Exact permutation, Wilcoxon, and Jonckheere--Terpstra p-values are one-sided (testing $K_a > K_b$ or decreasing trend); the paired $t$-test p-value is two-sided.
$K{=}1$ vs.\ $K{=}5$: exact permutation $p{=}0.047$ (one-sided), Wilcoxon $p{=}0.055$ (one-sided), paired $t(7){=}2.16$, $p{=}0.068$ (two-sided).
$K{=}1$ vs.\ $K{=}3$: exact permutation $p{=}0.137$, Wilcoxon $p{=}0.125$.
$K{=}3$ vs.\ $K{=}5$: exact permutation $p{=}0.535$ (no difference).
Jonckheere--Terpstra: $Z{=}1.43$, $p{=}0.077$.
The evidence supports a modest buffer effect of $K{>}1$ relative to pure self-training at $\alpha{=}1.0$, not a monotonically increasing benefit with $K$.

\paragraph{Partial contamination ($\alpha{=}0.5$).}
An additional 448 runs test $K \in \{1, 5\}$ at $\alpha{=}0.5$ (same matched-budget design, 8 seeds).
$K{=}1$ excess G7: $+2.20 \pm 0.14$; $K{=}5$: $+2.18 \pm 0.11$.
Mean difference: $+0.02$, paired $t(7) = 0.54$, $p = 0.61$, $d = 0.17$, 3/8 seeds consistent.
The source-diversity effect is undetectable at partial contamination, confirming that real data in the training mix already breaks self-reinforcement.

\section{Intervention Details}
\label{app:interventions}

\subsection{Strategy Summary}

\begin{table}[ht]
\centering
\caption{Intervention strategies and their effects on $\Rzero$ (from baseline $\Rzero = 2.62$). Only watermark-based filtering and herd immunity achieve $\Rzero < 1$ as single strategies.}
\label{tab:interventions}
\small
\begin{tabular}{@{}lllcc@{}}
\toprule
\textbf{Strategy} & \textbf{Mechanism} & \textbf{Parameter} & \textbf{Best $\Rzero$} & $\Rzero < 1$? \\
\midrule
Watermark + Filtering & Detect \& remove synthetic text & $\gammaD \!\uparrow,\, \gammaM \!\uparrow$ & ${<}1$ at 63\% & Yes \\
Herd Immunity & Fraction trained on clean data & $\betaD,\betaM \!\downarrow$ & ${<}1$ at 63\% cov. & Yes \\
Superspreader Control & Limit high-traffic model output & $\betaM \!\downarrow$ & 1.015 & No \\
Open Detection Stds & Standardize detection tools & $\gammaD \!\uparrow,\, \gammaM \!\uparrow$ & 1.142 & No \\
Output Restriction & Rate-limit synthetic generation & $\betaD \!\downarrow$ & 1.172 & No \\
Provenance Tracking & Data lineage metadata & $\gammaD \!\uparrow,\, \betaD \!\downarrow$ & 1.340 & No \\
\bottomrule
\end{tabular}
\end{table}

The herd immunity threshold is $1 - 1/\Rzero = 61.9\%$ at baseline---the ecosystem analog of vaccine coverage thresholds.

\subsection{Full Intervention Curves}

\begin{figure}[ht]
\centering
\includegraphics[width=0.7\linewidth]{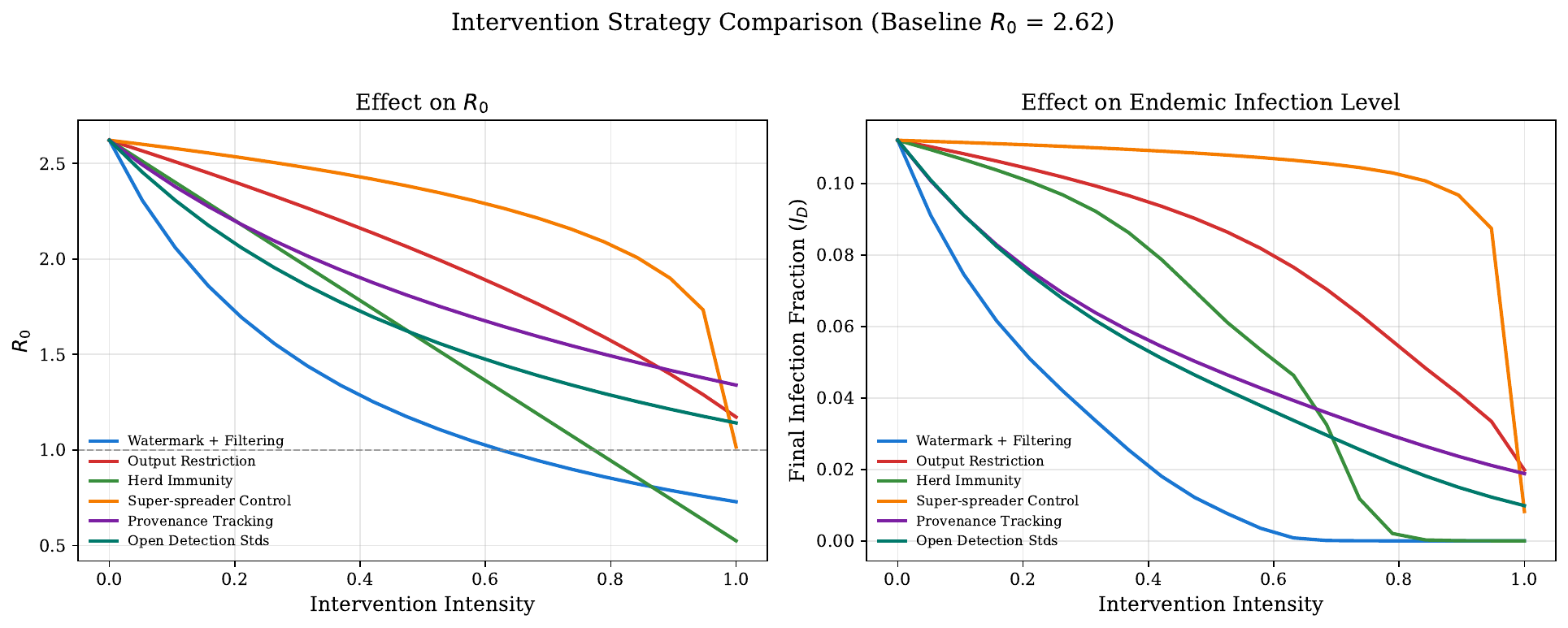}
\caption{$\Rzero$ as a function of intervention intensity for all six strategies. Only watermark-based filtering and herd immunity cross the $\Rzero = 1$ threshold.}
\label{fig:interventions_app}
\end{figure}

\begin{figure}[ht]
\centering
\includegraphics[width=0.7\linewidth]{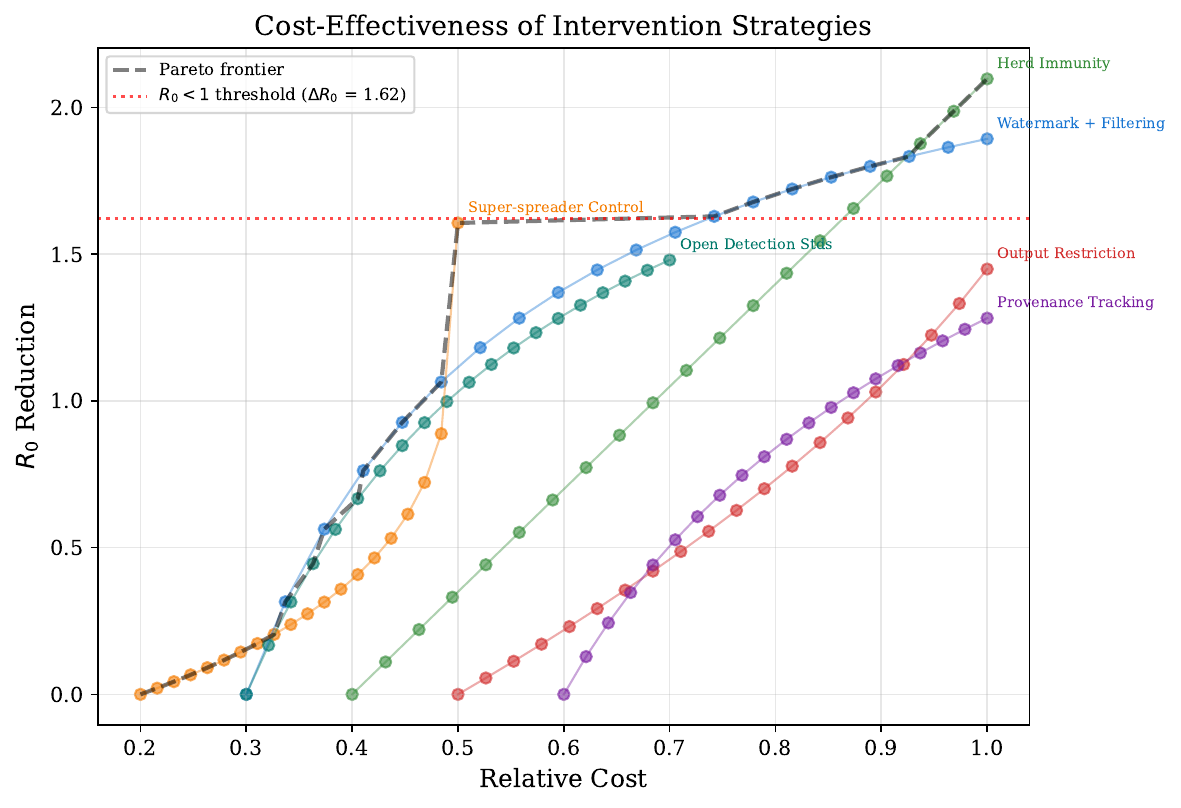}
\caption{Pareto frontier: illustrative cost vs.\ $\Rzero$ reduction across single-strategy sweeps (6 strategies $\times$ 20 intensity levels $=$ 120 points). Only watermark-based filtering and herd immunity achieve $\Rzero < 1$ alone. Combined interventions (15 pairs $\times$ 9 intensity combinations $=$ 135 evaluations) are analyzed separately in \cref{sec:interventions}.}
\label{fig:pareto}
\end{figure}

\subsection{Calibration Scenario Trajectories}

\begin{figure}[ht]
\centering
\includegraphics[width=0.7\linewidth]{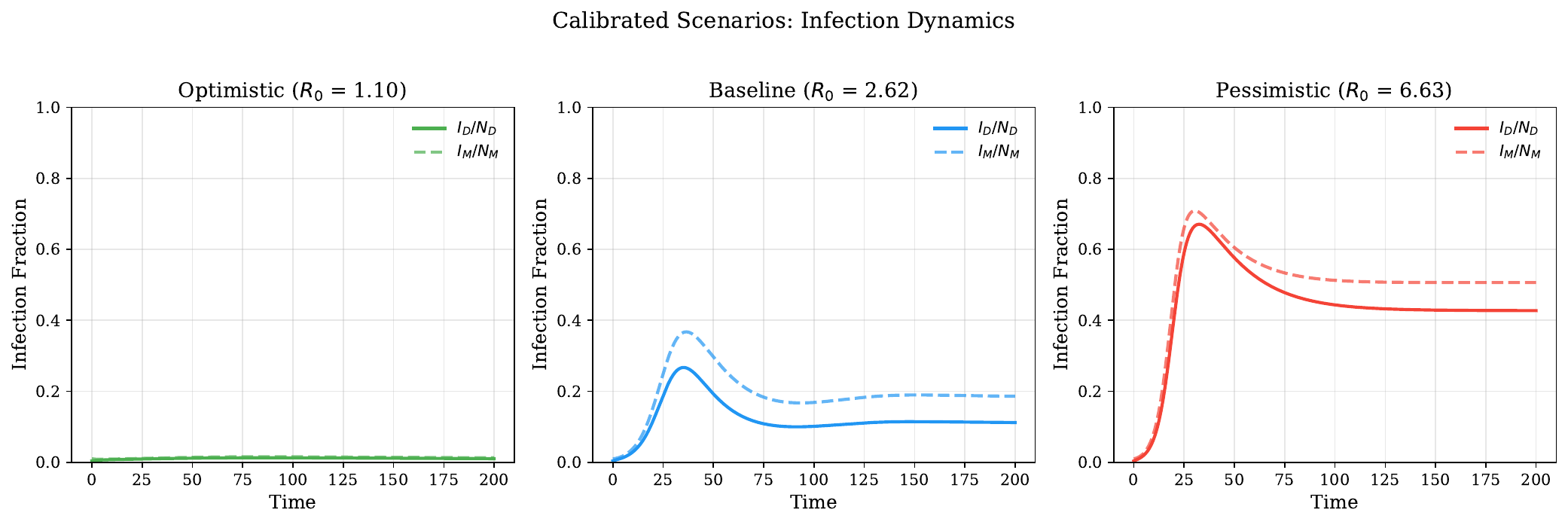}
\caption{ODE infection trajectories under the three calibration scenarios. The pessimistic scenario ($\Rzero = 6.63$) converges rapidly to a high endemic level; the optimistic scenario ($\Rzero = 1.10$) shows slow, marginal growth.}
\label{fig:scenarios_app}
\end{figure}

\subsection{Bifurcation Diagram}

\begin{figure}[ht]
\centering
\includegraphics[width=0.6\linewidth]{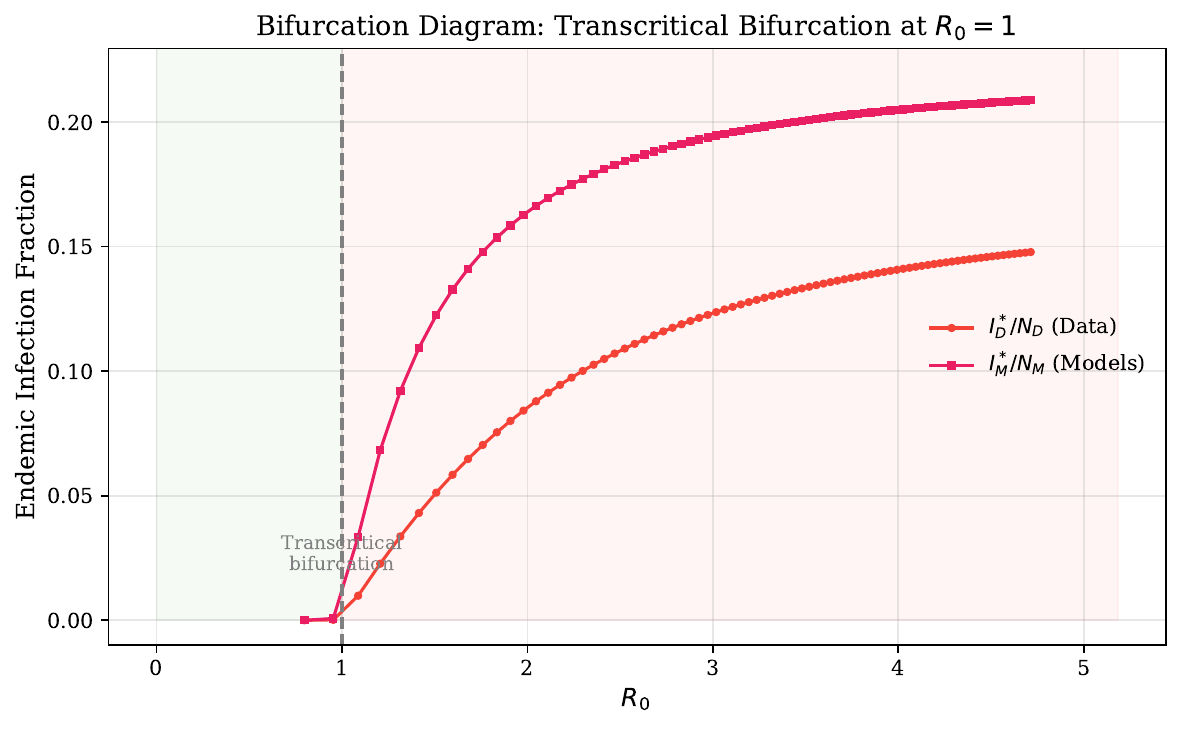}
\caption{Bifurcation diagram showing endemic equilibrium infection level as a function of $\Rzero$. The transcritical bifurcation at $\Rzero = 1$ is clearly visible: the DFE (zero infection) is stable for $\Rzero < 1$ and unstable for $\Rzero > 1$, where the endemic branch emerges.}
\label{fig:bifurcation_app}
\end{figure}

\section{Code--Paper Theorem Numbering}
\label{app:theorem_numbering}

The verification code (\texttt{theorem\_verification.json}) uses a different numbering convention than the paper.
The mapping is:
\begin{itemize}[leftmargin=*,itemsep=1pt]
\item \texttt{theorem1\_dfe\_exists} $\to$ existence part of \cref{thm:dfe} (Theorem~2).
\item \texttt{theorem2\_dfe\_stability} $\to$ stability part of \cref{thm:dfe}.
\item \texttt{theorem3\_ee\_existence} $\to$ \cref{thm:ee} (Proposition~3).
\item \texttt{theorem4\_bifurcation} $\to$ \cref{thm:bifurcation} (Proposition~4).
\item \texttt{proposition5\_sirs\_oscillation} $\to$ \cref{prop:sirs} (Proposition~5).
\end{itemize}
The $\Rzero$ formula (\cref{thm:R0}, Theorem~1 in the paper) is verified by direct comparison of the analytic formula with the NGM spectral radius, not as a separate entry in the verification JSON.

\section{Reproducibility Statement}

All experiments use publicly available models (GPT-2 from HuggingFace) and datasets (WikiText-103, Tiny Shakespeare).
Random seeds are fixed at $\{42, 123, 456\}$ for single-chain experiments and $\{42, 123, 456, 789, 1024, 2048, 3141, 4096\}$ for source-diversity experiments.
The ODE solver uses SciPy's \texttt{solve\_ivp} with RK45, relative tolerance $10^{-8}$, and absolute tolerance $10^{-10}$.
The ABM is implemented in Python with NetworkX.
Theorem verification counts (e.g., 169/200 for EE existence, 196/200 for bifurcation) are from a single run of the verification script with unseeded random parameter draws; exact counts may vary on rerun, though the qualitative result (near-100\% agreement) is stable.
Code will be released upon publication.
Total compute: approximately 5 GPU-hours (single-chain, 192 runs) $+$ 12 GPU-hours ($K$-sweep, 640 runs) $+$ 8 GPU-hours ($\alpha{=}0.5$ robustness, 448 runs) $\approx$ 25 GPU-hours on a single NVIDIA RTX 4090.